\newcounter{tcbcounter}
\def\eqref#1{equation~\ref{#1}}
\def\1{\bm{1}}
\DeclareMathAlphabet{\mathsfit}{\encodingdefault}{\sfdefault}{m}{sl}
\SetMathAlphabet{\mathsfit}{bold}{\encodingdefault}{\sfdefault}{bx}{n}
\def\sH{{\mathbb{H}}}
\def\sI{{\mathbb{I}}}
\newcommand{\E}{\mathbb{E}}
\newtheorem{theorem}{Theorem}
\newif\ifcomments
    \providecommand\roi[1]{\textcolor{purple}{[Roi: #1]}}    \providecommand\rotem[1]{\textcolor{blue}{[Rotem: #1]}}
    \providecommand\nitay[1]{\textcolor{magenta}{[Nitay: #1]}}
    \providecommand\todo[1]{\textcolor{red}{[TODO: #1]}}
    \providecommand{\roi}[1]{}    \providecommand{\rotem}[1]{}
    \providecommand{\nitay}[1]{}
    \providecommand{\todo}[1]{}
\newcommand{\RMSE}{\texttt{RMSE}}
\newcommand{\ACC}{\texttt{ACC}}
\newcommand{\SIM}{\texttt{SIM}}
\newtcolorbox[auto counter, number within=section]{prompt}[3][]{%
  enhanced,
  breakable,
  colback=#2!5!white,
  colframe=#2!75!black,
  title=\textbf{Box \thetcbcounter: #3},
  fontupper=\footnotesize\fontfamily{cmr}\selectfont,
  #1
}
\newtcolorbox{quotebox}{
enhanced,
boxrule=0pt,frame hidden,
borderline west={4pt}{0pt}{CadetBlue!90!black},
colback=CadetBlue!20!white,
sharp corners
}
\tiny\color[RGB]{128,128,128}, 
\title{The Alternative Annotator Test for LLM-as-a-Judge: \\ How to Statistically Justify Replacing Human Annotators with LLMs}
\author{Nitay Calderon$^T$ \And Roi Reichart$^T$ \\
        $^T$Faculty of Data and Decision Science, Technion \\ 
        $^H$Faculty of Computer and Information Science, University of Haifa \\ 
        \texttt{nitay@campus.technion.ac.il} \quad \texttt{roiri@technion.ac.il} \quad
        \texttt{rdror@is.haifa.ac.il} \\
        \And Rotem Dror$^H$}
\begin{document}

\maketitle

\doparttoc 
\faketableofcontents 

\begin{abstract}
The ``LLM-as-an-annotator'' and ``LLM-as-a-judge'' paradigms employ Large Language Models (LLMs) as annotators, judges, and evaluators in tasks traditionally performed by humans. LLM annotations are widely used, not only in NLP research but also in fields like medicine, psychology, and social science. Despite their role in shaping study results and insights, there is no standard or rigorous procedure to determine whether LLMs can replace human annotators. In this paper, we propose a novel statistical procedure, the Alternative Annotator Test (alt-test), that requires only a modest subset of annotated examples to justify using LLM annotations. Additionally, we introduce a versatile and interpretable measure for comparing LLM annotators and judges. To demonstrate our procedure, we curated a diverse collection of ten datasets, consisting of language and vision-language tasks, and conducted experiments with six LLMs and four prompting techniques. Our results show that LLMs can sometimes replace humans with closed-source LLMs (such as GPT-4o), outperforming the open-source LLMs we examine, and that prompting techniques yield judges of varying quality. We hope this study encourages more rigorous and reliable practices.
\footnote{Code for the procedure and datasets are available at: \url{https://github.com/nitaytech/AltTest}}
\end{abstract}

\section{Introduction}

The rise of Large Language Models (LLMs) has transformed the field of Natural Language Processing (NLP), bringing unprecedented capabilities in reasoning and generating human-like text \citep{KojimaGRMI22, achiam2023gpt, laskar-etal-2023-systematic, YangJTHFJZYH24}. Recently, a new trend has emerged where LLMs are employed as annotators and judges across various NLP applications \citep{li2024generation, TanLWBJBKL0024}.

One key advantage of LLM-as-an-annotator and LLM-as-a-judge\footnote{The term ``LLM-as-a-judge'' typically refers to LLMs evaluating outputs of other LLMs. It can be viewed as a special case of the broader ``LLM-as-an-annotator'' paradigm. However, since ``LLM-as-a-judge'' is more widely used, we adopt it throughout this work to refer more generally to any evaluation, annotation, or labeling of texts (or images) traditionally performed by humans, regardless of the input source.} paradigms is the scalability and speed of LLMs. They can quickly annotate large-scale datasets, reducing the time required for tasks traditionally performed by costly human annotators \cite{NasutionO24}. LLMs also avoid challenges inherent to human factors, such as fatigue and guideline misinterpretation \cite{uma2021learning, bartsch2023self}. In certain cases, they even outperform crowd-workers \citep{Gilardi2023chat, nahum2024llms}.

Indeed, LLMs-as-judges are extensively used in research, taking on a pivotal role once filled by humans. They are employed to annotate new datasets \citep{GatCFCSR24, TanLWBJBKL0024}, or refine existing ones \citep{nahum2024llms, Pavlovic2024Effectiveness}, and commonly serve as evaluators for benchmarking models and methods \citep{Ahmed2024Can, gu2024survey, li2024generation}.

LLMs' influence extends far beyond the NLP field.
They annotate papers for literature reviews \citep{calderon2024behalf, Joos2024Cutting} or extract findings from academic literature \citep{khraisha2024can, NaikKBDH24}. They are also utilized in cognitive sciences to simulate human subjects \citep{AherAK23, Shapira2024Can, trott2024large} and in social science, researchers leverage LLM annotations to uncover social and cultural insights \citep{Ventura2023Navigating, ZiemsHSCZY24}. 
Accordingly, LLMs directly shape the results, findings, and insights of studies and guide the direction of scientific inquiry, prioritization, and innovation. 

Despite the advantages of the LLM-as-a-judge paradigm, research shows that LLMs amplify biases, leading to unfair or inconsistent judgments \citep{Ashktorab2024Aligning, chen2024humans, Ye2024Justice} and that they may struggle with tasks that require deep contextual understanding or domain-specific expertise \citep{ravid2023140, Szymanski2024Limitations}. These weaknesses highlight the need for rigorous evaluation and transparency when relying on LLM annotations in research. 

Yet, many studies employing LLM annotations do not explicitly measure the alignment between LLMs and humans, and those that do typically use traditional measures such as accuracy (\% agreements), F1 score, Inter-Annotator-Agreement (IAA) kappas, and correlation \citep{li2024llms}, which have limitations.
To start, IAA measures assess agreement among a group of annotators, while we aim to compare the LLM to the group. Other measures frequently rely on majority vote labels, overlooking important nuances that individuals introduce. Moreover, there are no established criteria for making a definitive yes/no decision on whether an LLM can replace humans (e.g., \textit{``is an F1 score of 0.6 sufficient?''}). This decision demands statistical rigor, which often lacks in the way researchers apply traditional measures.
Finally, they can only evaluate whether an LLM \textit{matches} human performance (i.e., is bounded by it) but cannot determine whether it provides a \textit{better} alternative.

We argue that to justify using an LLM instead of human annotators, researchers should demonstrate that \textit{the LLM offers a better alternative to recruiting human annotators.} In other words, when factoring in the cost-benefit and efficiency advantages of LLM annotations, they should be as good as or better than human annotations. In this paper, we propose a statistical procedure to verify this claim, which we call \textit{the Alternative Annotator Test}, or simply \textit{alt-test}. This procedure is simple and requires minimal effort to apply; it involves comparing the LLM to a small group of human annotators (at least three) on a modest subset of examples (between 50 and 100). Our procedure is described in \S\ref{sec:method} and illustrated in Figure~\ref{fig:intro_fig}. Once applied, researchers can confidently rely on the LLM's annotations for their work. 

In addition, we define a measure for comparing LLM judges called the \textit{Average Advantage Probability}. This measure is naturally derived from our statistical procedure and represents the probability that the LLM annotations are as good as or better (e.g., by being closer to the majority) than those of a randomly chosen human annotator. It possesses desirable properties that traditional measures lack while maintaining a high correlation with them. It is versatile, supports different types of annotations, and is highly interpretable.

We exemplify the application of our procedure with six LLMs and four prompting techniques. To this end, we curate a diverse collection of ten datasets, each with instances annotated by multiple annotators. Our datasets vary in size, annotation types (discrete, continuous, and free-text), number of annotators (3 to 13), and levels of annotator expertise (crowd-workers, skilled annotators, and experts). They encompass a wide range of language tasks, including two vision-language tasks.

Our results indicate that in many cases, LLMs can serve as an alternative to human annotators. Specifically, on nine datasets, at least one LLM, with some prompting technique, successfully passed the alt-test. We found that closed-source LLMs (such as GPT-4o and Gemini-1.5) consistently outperform open-source models we examined (like Mistral-v3 and Llama-3.1), and that in-context learning generally improves LLM performance, while chain-of-thought and ensemble methods do not yield similar benefits. 

Finally, in Appendix \ref{sec:advanced}, we propose modifications to our procedure to address advanced scenarios: handling imbalanced labels (\S\ref{sub:imbalanced}), benchmarking against a single expert (\S\ref{sub:single_expert}), incorporating annotator quality scores (\S\ref{sub:quality}), and respecting minority opinions in subjective tasks (\S\ref{sub:subjective}).

Our contributions are as follows: (1) We propose a statistical procedure, the alt-test, to justify replacing human annotators with LLMs; (2) We introduce a versatile and interpretable measure, the average advantage probability, for comparing LLM judges; (3) We curate a diverse collection of ten datasets and analyze six LLMs and four prompting techniques, demonstrating that LLMs can sometimes replace humans; (4) We develop a theorem regarding the optimal LLM-as-a-judge (\S\ref{sub:comparing}, \S\ref{sub:theorem}).

We encourage researchers to adopt our procedure and hope this study paves the way for rigorous scientific practices in NLP and beyond.
\section{Previous Work}

Research on LLMs as annotators and judges is a rapidly growing field \citep{chiang2023vicuna, zheng2024judging}, resulting in numerous surveys \citep{gu2024survey, li2024generation, TanLWBJBKL0024, Pavlovic2024Effectiveness}. Most studies focus on enhancing LLM performance, either by parameter tuning \citep{GekhmanHAES23, YueWCZS023, Zhu2023JudgeLM, JiangLZHLC24, KimS0JLLYSKTS24} or prompting strategies \citep{BaiY0LHWYZXLZLH23, Moniri2024debates, Song2024Many}. For instance, \citet{dong2024can} investigated personalized LLM judges, \citet{verga2024replacing} proposed using a panel of diverse LLMs, and \citet{chen2024mllm} extended LLM-as-a-judge to multimodal tasks.

Many statistical works propose corrections to estimations that are built with LLM annotations \citep{Angelopoulos2023PPI, EgamiHSW23, Angelopoulos2024PPI, Chatzi2024Prediction, Gligoric2024Unconfident, ludwig2024large}. Conversely, the question we address is how to justify replacing human annotators with LLMs, ensuring researchers can confidently apply LLMs for model evaluation or data annotation.

While existing works do not directly address how to justify human replacement, many have explored how well LLMs align with human annotators \citep{chiang2023can, Ahmed2024Can, Bavaresco2024LLMs, Chen0PLKP24, gera2024justrank, Lambert2024RewardBench, nahum2024llms, NasutionO24, Tan2024Judebench, trott2024large}, often focusing on specific LLM limitations or biases \citep{Wu2023Style, Ashktorab2024Aligning, jung2024trust, chen2024humans, WangLCCZLCKLLS24, xu-etal-2024-pride}. 
These studies rely on traditional measures such as accuracy, F1 score, correlation, or metrics that quantify bias. In contrast, we propose a statistical procedure to determine whether an LLM can be used, providing a clear yes/no answer. Additionally, we introduce an interpretable and versatile measure for comparing LLM judges.

\begin{figure}[t]
    \centering
    \includegraphics[width=0.475\textwidth]{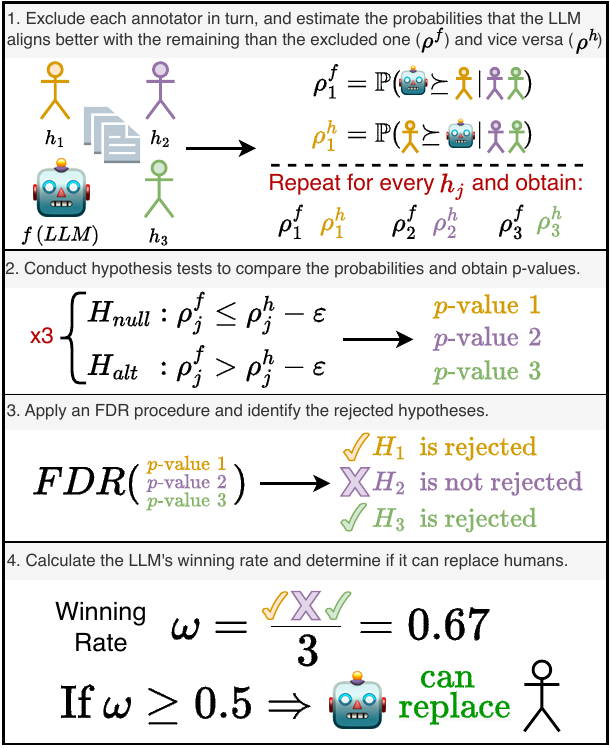}
    \caption{\textbf{An Illustration of the Alt-Test:} Given instances annotated by human annotators, we first exclude each annotator in turn to estimate the probabilities that the LLM better represents the remaining annotators and that the excluded annotator better represents them. We then test whether the LLM probability exceeds the annotator probability (considering a cost-benefit penalty $\varepsilon$), and apply a False Discovery Rate (FDR) controlling procedure. Then, we calculate the winning rate, $\omega$, as the proportion of rejected hypotheses. If $\omega \geq 0.5$, we conclude that the LLM is more likely to hold an advantage over human annotators, which justifies using it.}
    \label{fig:intro_fig}
\vspace{-0.8em}
\end{figure}

\vspace{-1em}
\section{Method}
\label{sec:method}

We propose using an LLM-as-a-judge instead of human annotators when it offers a comparable alternative to recruiting an annotator. By comparing the predictions of the LLM to those of humans, we can evaluate which more closely emulates the gold label distribution. Gold labels represent the ``true'' or ground truth annotations and are typically determined through rigorous processes, such as consensus among experts or extensive quality control. Consequently, since experts are expensive and often inaccessible, we assume gold labels are unavailable. Hence, a common approach is to approximate them using the collective responses of multiple annotators. This is the exact setup we use in this paper: a modest subset of randomly sampled examples, each annotated by multiple annotators.\footnote{In \S\ref{sub:humans}, we discuss the number of annotators, their profiles, and levels of expertise to ensure reliable outcomes.}

Accordingly, a key consideration in our method is that the perspective of every annotator is valued. Specifically, our leave-one-out approach excludes one annotator at a time and evaluates how well the LLM's annotations align with those of the remaining annotators. Similarly, we evaluate the alignment of the excluded annotator with the remaining annotators. We then compare the LLM and the excluded annotator, justifying the use of the LLM-as-a-judge if \textit{the LLM aligns more closely with the collective distribution than an individual does}. The procedure is illustrated in Figure~\ref{fig:intro_fig}.

\paragraph{Notations and Definitions}
For a dataset of $n$ instances $\{x_1,\ldots,x_n\}$ and $m$ human annotators $\{h_1,\ldots,h_m\}$, we denote the annotation of the $j$th annotator for instance $x_i$ as $h_j(x_i)$. 
The annotation predicted by the LLM is denoted as $f(x_i)$.
In addition, $[-j]$ represents the set of indices from $1$ to $m$ excluding the $j$th index, i.e., $[-j]=\{1, \ldots, j-1, j+1, \ldots, m\}$.
The set of indices of the instances annotated by $h_j$ is denoted as $\sI_j$. Similarly, $\sH_i$ is the set of indices of human annotators that annotated $x_i$. For example, assume we have three instances and four annotators. $\sI_2=\{2,3\}$ means that the second annotator, $h_2$, annotated instances $x_2$ and $x_3$, and $\sH_1=\{1,3,4\}$ means that the first instance, $x_1$, was annotated by the first, third, and fourth annotators, $h_1, h_3, h_4$. 

\subsection{Computing the Instance Alignment Score}

We start by examining the removal of each human annotator $h_j$ in turn and compute a score that measures the alignment between the annotations of the $[-j]$ human annotators and the annotation of the LLM for instance $x_i$. We use $S(f, x_i, j)$ to denote the \textit{alignment scoring function} between $f(x_i)$ and the annotations of $\sH_i[-j]$.
For example, $S$ could be $\RMSE$ (root mean squared error) in regression tasks (continuous numerical labels) or $\ACC$ (accuracy) in classification tasks (categorical or rank labels). 

In generation tasks (e.g., machine translation), $S$ can be computed using a relevant evaluation metric (denoted as $\texttt{sim}$) that typically measures the similarity between the LLM-generated output and the human-generated output.
For convenience, we assume that higher values of $S$ indicate a better alignment between an LLM and the human annotators; thus, we use negative $\RMSE$. Below, we formally define the mentioned variants of $S$:
\resizebox{\columnwidth}{!}{%
\begin{minipage}{\columnwidth}
\begin{align*}
    -\RMSE(f, x_i, j) &= -\sqrt{\frac{1}{|\sH_i|-1}\sum_{k \in \sH_i[-j]}(f(x_i) - h_k(x_i))^2} \\
    \ACC(f, x_i, j) &= \frac{1}{|\sH_i|-1} \sum_{k \in \sH_i[-j]} \1\{f(x_i) = h_k(x_i)\} \\
    \SIM(f, x_i, j) &= \frac{1}{|\sH_i|-1} \sum_{k \in \sH_i[-j]} \texttt{sim}(f(x_i), h_k(x_i))
\end{align*}
\vspace{0.1em}
\end{minipage}}
Note that $-\RMSE(h_j, x_i, j)$,  $\ACC(h_j, x_i, j)$, and $\SIM(h_j, x_i, j)$ represent score differences between $h_j$ and the other annotators. Consequently, we are interested in comparing $S(f, x_i, j)$ to $S(h_j, x_i, j)$.

\subsection{Estimating the Advantage Probabilities}
After computing the alignment score for each instance, we estimate the likelihood that the LLM achieves a comparable alignment with the annotators to that of the excluded annotator.
The estimator will be constructed by calculating the percentage of instances for which the score of the LLM, $S(f, x_i, j)$, was higher or equal to the score of the $j$th excluded human annotator, $S(h_j, x_i, j)$. 
We represent this event (for $x_i$) using the indicator:
\begin{align*}
W_{i,j}^f = 
\begin{cases}
1, & \text{if } S(f, x_i, j) \ge S(h_j, x_i, j) \\
0, & \text{otherwise }
\end{cases}
\end{align*}
Similarly, we define the indicator $W_{i, j}^h$ by reversing the inequality (to $\le$) in the definition above, representing that the annotation of $h_j$ for $x_i$ is comparable to that of the LLM. 

The expectation of $W_{i,j}^f$ represents the probability that the LLM annotations are as good as or better than those of $h_j$. We estimate this probability by averaging $W_{i,j}^f$ values across all instances:
\begin{align*}
    \rho_j^f = \hat{\mathbb{P}}(\text{LLM}\succeq h_j) = \hat{\E}[W_{i,j}^f] = \frac{1}{|\sI_j|}\sum_{i \in \sI_j}{W_{i, j}^f}    
\end{align*}
We denote this estimation of the \textit{advantage over $h_j$ probability} as $\rho_j^f$. Similarly, $\rho_j^h$ estimates the probability that $h_j$ holds an advantage over the LLM, calculated by averaging the values of $W_{i, j}^h$. The set $\{(\rho_j^f, \rho_j^h)\}_{j=1}^m$ is used in our statistical procedure.


\subsection{Should the LLM Replace Annotators?}

Using an LLM instead of a human annotator is justified if the LLM offers a reliable alternative to hiring an annotator. To formalize this, if $\rho_j^f$ is \textbf{significantly} larger than $\rho_j^h$ it indicates that employing the LLM instead of $h_j$ is a \textit{justified evidence-based decision}. Notice, however, that employing an LLM is a cheaper and less labor-intensive alternative. Therefore, we introduce $\varepsilon$,\footnote{In \S\ref{sub:threshold} we explore how different $\varepsilon$ values impact our procedure and recommend suitable ones for researchers.} a \textit{cost-benefit hyperparameter} which penalizes $\rho_j^h$ to reflect the higher cost and effort associated with human annotation.

We define the following set of hypothesis testing problems to test if the LLMs' relative advantage probability is significantly larger than that of $h_j$:
\begin{align*}
    \mathbf{H_{0j}:} \rho_j^f \le \rho_j^h - \varepsilon \quad\text{vs.}\quad
    \mathbf{H_{1j}:} \rho_j^f > \rho_j^h - \varepsilon 
\end{align*}
The appropriate statistical test for this hypothesis problem is a paired $t$-test \cite{dror2018hitchhiker}, which examines the difference between the $i$th indicators: $d_{i,j}=W_{i,j}^h-W_{i,j}^f$. The null hypothesis asserts that $\bar{d}_{j}=\rho_j^h-\rho_j^f$ is greater than or equal to $\varepsilon$, while the alternative hypothesis posits that it is smaller.

The test statistic $t_j$ is defined as:
\begin{align*}
    t_j = \frac{\bar{d}_j-\varepsilon}{s_{j} / \sqrt{n}} \quad s_{j} = \sqrt{\frac{\sum_{i=1}^n \left(d_{i,j} - \bar{d}_{j}\right)^2}{n-1}}
\end{align*}
The p-value can be calculated using a student's $t$-distribution table. When $n<30$, the normality assumption may not hold, and a non-parametric test (e.g., Wilcoxon signed-rank) should be used. If the p-value $<\alpha$ (typically $\alpha = 0.05$),
we reject the null hypothesis, concluding that \textit{the LLM holds a statistically significant advantage over $h_j$ when considering the cost-benefit tradeoff.}

So far, we discussed the advantage of LLMs over a single human annotator. To generalize our conclusion to any annotator, we measure the percentage of annotators that the LLM ``wins'', i.e., the proportion of rejected null hypotheses. We denote this \textit{winning rate (WR)} by $\omega$, formally:
\resizebox{\columnwidth}{!}{%
\begin{minipage}{\columnwidth}
\begin{align*}
    \omega = \frac{1}{m}\sum_{j=1}^{m}{\1\{H_{0j} \text{ is rejected}\}}
\end{align*}
\end{minipage}}
where $\1\{H_{0j} \text{ is rejected}\}$ is an indicator that receive one if the null hypothesis is rejected and zero, otherwise. 
If $\omega \ge 0.5$,\footnote{This is a hyperparameter. It is set to 0.5 to establish that it is \textit{more likely} that the LLM holds an advantage over humans. Stricter thresholds can be used in high-stakes domains.} then the LLM wins the majority of human annotators, hence \textit{we assert that it can replace human annotators.}

\paragraph{Multiple Comparison Correction} 
Simply counting the number of rejected null hypotheses is problematic due to the accumulation of Type-I errors when performing multiple hypothesis tests, particularly when the hypotheses are dependent \cite{dror2017replicability}. In our case, the dependency arises because the score of $h_j$ relies on the annotations of the remaining $[-j]$ annotators (see how $S$ is defined). The standard practice to address this issue is a multiple comparison correction.

We suggest using a procedure that controls the false discovery rate (FDR), which is the expected proportion of false positives (incorrect rejections of null hypotheses) among all rejected hypotheses in a multiple-hypothesis testing scenario. In other words, the FDR-controlling procedure ensures that the observed WR $\omega$ is reliable and does not overestimate the true percentage of wins due to accumulated false rejections or dependence between hypotheses. We recommend using the Benjamini-Yekutieli (BY) procedure (\citet{benjamini2001control}, see Algorithm~\ref{alg:by} in the Appendix) to control the FDR, as it is specifically suited for scenarios where the null hypotheses are dependent.
In our experiments, we use the standard target FDR level of $q=0.05$ (i.e., in expectation, at most 5\% of the rejections will be false rejections).

\paragraph{Summary: the Alt-Test} As illustrated in Figure~\ref{fig:intro_fig}, the alt-test involves the following steps: First, we compute the set of probabilities $\{(\rho_j^f, \rho_j^h)\}_{j=1}^m$, where each $\rho_j$ represents the advantage of the LLM over $h_j$ and vice versa. Next, we conduct $m$ one-sample proportion t-tests for the difference $\rho_j^h-\rho_j^f$ against $\varepsilon$, resulting in a corresponding set of $m$ p-values. We then apply the BY procedure to these p-values, which identifies the set of rejected null hypotheses. Finally, we compute the winning rate (the proportion of rejected hypotheses) and if $\omega \ge 0.5$, we can statistically justify using LLM annotations.

\subsection{How to Compare LLM Judges?}
\label{sub:comparing}


In many scenarios, we wish to compare different LLM judges. 
While it is possible to compare LLMs by their winning rate ($\omega$), we argue this is suboptimal. First, $\omega$ does not account for the magnitude of the wins. For example, $\rho_j^f = 0.9$ and $\rho_j^f = 0.6$ contribute equally to $\omega$ if their respective null hypotheses are rejected. Second, $\omega$ depends on the value of $\varepsilon$, and third, the range of its possible values depends on the number of human annotators, making it a coarse measure. For instance, with only three annotators, $\omega$ value is limited to 0, \sfrac{1}{3}, \sfrac{2}{3}, 1.

\begin{table*}[t]
\centering
\large
\begin{adjustbox}{width=\textwidth}
\begin{tabular}{l|rrrrr|cc|l}
\toprule
\multicolumn{9}{c}{\textbf{Discrete Annotation Tasks}} \\
\midrule
Dataset & $m$ & $n$ & Cats & I.p.A & A.p.I & Agree & Fleiss's $\kappa$ & Task Description \\
\midrule
WAX & 8 C & 246 & 16 & 172 & 5.61 & 0.33 & 0.26 & Identify the type of relationship between two associated words. \\
LGBTeen & 4 E & 880 & 5 & 640 & 2.91 & 0.69 & 0.53 & Assess the emotional support provided by LLMs to queer youth. \\
MT-Bench & 3 E & 120 & 3 & 82 & 2.05 & 0.66 & 0.49 & Compare two conversations between a user and different LLMs. \\
Framing & 4 S & 2552 & 3 & 1914 & 3.00 & 0.79 & 0.57 & Annotate climate articles with frame-related yes/no questions. \\
CEBaB-A & 10 C & 1008 & 3 & 403 & 4.00 & 0.86 & 0.74 & Determine the sentiment for four aspects of restaurant reviews.  \\
\bottomrule
\toprule
\multicolumn{9}{c}{\textbf{Continuous Annotation Tasks}} \\
\midrule
Dataset & Anns & Items & Scale & I.p.A & A.p.I & MAE & Pearson & Task Description \\
\midrule
SummEval & 3 E & 6400 & 1--5 & 6400 & 3.00 & 0.51 & 0.74 & Rate model-generated summaries on four aspects. \\
10k Prompts & 13 S & 1698 & 1--5 & 296 & 2.26 & 0.84 & 0.41 & Rate the quality of synthetic and human-written prompts. \\
CEBaB-S & 10 C & 711 & 1--5 & 219 & 3.08 & 0.67 & 0.67 & Identify the star rating (1-5) given in restaurant reviews. \\
\raisebox{-0.2em}{\includegraphics[height=1em]{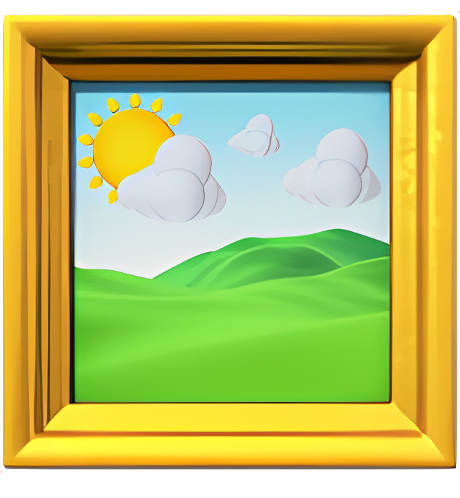}} Lesion & 6 S & 500 & 1--6 & 497 & 5.96 & 0.44 & 0.77 & Score five melanoma-related features based on lesion images. \\
\bottomrule
\toprule
\multicolumn{9}{c}{\textbf{Free-Text Annotation Tasks}} \\
\midrule
Dataset & Anns & Items & -- & I.p.A & A.p.I & \multicolumn{2}{c|}{Avg. Similarity} & Task Description \\
\midrule
\raisebox{-0.2em}{\includegraphics[height=1em]{figures/framed_picture.png}} KiloGram & 50 C & 993 & -- & 144 & 7.27 & \multicolumn{2}{c|}{0.28} & Generate free-text descriptions of tangram images. \\
\bottomrule
\end{tabular}
\end{adjustbox}
\caption{\textbf{Details of the Ten Datasets:} The number of human annotators ($m$), data instances ($n$), and categories (Cats). The letter in the `$m$' column indicates the type of annotators: Experts (E), Skilled (S), or Crowd-workers (C).
I.p.A and A.p.I denote the average numbers of items per annotator and annotators per item, respectively. For discrete tasks, we compute the proportion of pairwise agreements between human annotators (Agree) and Fleiss's $\kappa$. For continuous tasks, we compute the mean absolute error between annotators (MAE) and the average Pearson correlation. For the text generation task, we compute the average embedding cosine similarity (see Table~\ref{tab:results_kilogram}).}
\label{tab:iaa_main}
\vspace{-0.8em}
\end{table*}

Therefore, for comparing LLM judges, we propose the \textit{Average Advantage Probability (AP)}:
\begin{align*}
    \rho = \frac{1}{m}\sum_{j=1}^{m}{\rho_j^f}
\end{align*}
We argue that $\rho$ is a good measure for comparing LLM judges due to its desirable properties. Unlike $\omega$, $\rho$ spans a denser range of values and accounts for the magnitude of $\rho_j^f$s. Furthermore, it is more interpretable than traditional measures like F1, Cohen's $\kappa$, or correlation --- it directly represents the probability that the LLM annotations are as good as or better than those of a randomly chosen annotator. This intuitive interpretation makes it accessible and meaningful for decision-makers. Finally, $\rho$ can be applied consistently across different types of annotation tasks (discrete, continues, and free-text), providing a unified evaluation framework that 
eliminates the need to switch between measures.

\paragraph{The Optimal LLM-as-a-Judge} 
We now turn to the question of what constitutes the optimal LLM-as-a-judge. We define it as an LLM that achieves an advantage probability of $\rho=1$ (since $\omega$ depends on $n$ and $\varepsilon$, we do not include it in the theorem). The optimal LLM-as-a-judge naturally depends on the choice of the scoring function, $S(f, x_i, j)$. The theorem below addresses two functions: $\ACC$ (for discrete tasks) and $-\RMSE$ (for continuous tasks). See Appendix~\ref{sub:theorem} for more details and the proof.

\begin{theorem}[Optimal LLM-as-a-Judge]
For a given dataset, let $S(f, x_i, j)$ be the alignment scoring function. The optimal LLM-as-a-judge, denoted as $f^*(x_i)$, is defined as follows:
\begin{itemize}
    \item If $S=\ACC$, 
    then $f^*(x_i)=MV(x_i)$, predicting the majority vote of the annotators for $x_i$.
    \item If $S = -\RMSE$, 
    then $f^*(x_i)=\frac{\sum_{k\in\sH_i}{h_k(x_i)}}{|\sH_i|}$, predicting the mean annotation for $x_i$.
\end{itemize}
In both cases, the optimal LLM-as-a-judge achieves an advantage probability of $\rho=1$.
\end{theorem}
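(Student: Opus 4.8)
The plan is to first reduce ``optimality'' to a pointwise verification. Since $\rho = \frac{1}{m}\sum_{j=1}^m \rho_j^f$ is an average of the quantities $\rho_j^f \in [0,1]$, each of which is itself an average of the indicators $W_{i,j}^f \in \{0,1\}$, we always have $\rho \le 1$. Hence any judge attaining $\rho = 1$ is automatically optimal, and it suffices to show that the proposed $f^*$ attains it. Moreover, $\rho = 1$ holds if and only if $W_{i,j}^f = 1$ for every annotator index $j$ and every $i \in \sI_j$, i.e. if and only if $S(f^*, x_i, j) \ge S(h_j, x_i, j)$ for all valid pairs $(i,j)$. So the whole theorem reduces to establishing this single pointwise inequality for each of the two scoring functions.

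For the $\ACC$ case I would argue by counting. Fix $x_i$ and an excluded annotator $j$, write $v = MV(x_i)$ and $a = h_j(x_i)$, and for a value $w$ let $c(w) = |\{k \in \sH_i : h_k(x_i) = w\}|$, so that $v$ maximizes $c$. Up to the common normalization $1/(|\sH_i|-1)$, $\ACC(f^*, x_i, j)$ counts the members of $\sH_i[-j]$ equal to $v$, namely $c(v) - \1\{a=v\}$, while $\ACC(h_j, x_i, j)$ counts those equal to $a$, namely $c(a) - 1$. If $a = v$ both counts equal $c(v)-1$ and the inequality holds with equality; if $a \ne v$ the first count is $c(v)$ and the second is $c(a)-1 < c(a) \le c(v)$, so the inequality is strict. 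Either way $\ACC(f^*, x_i, j) \ge \ACC(h_j, x_i, j)$, and the argument is robust to ties in the majority vote since any maximizer of $c$ works.

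For the $-\RMSE$ case I would first note that, since $t \mapsto \sqrt{t}$ is increasing and the normalization is common, the desired inequality is equivalent to $\sum_{k\in\sH_i[-j]}(\bar h-h_k)^2 \le \sum_{k\in\sH_i[-j]}(h_j - h_k)^2$, where $\bar h = f^*(x_i)$ is the full mean over $\sH_i$. Applying the parallel-axis (bias--variance) decomposition about $\mu_{-j}$, the mean of the $N := |\sH_i|-1$ remaining annotators, both sides share the term $\sum_{k\in\sH_i[-j]}(\mu_{-j}-h_k)^2$, so the comparison collapses to $(\bar h - \mu_{-j})^2 \le (h_j - \mu_{-j})^2$. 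The key algebraic identity is that the full mean relates to the leave-one-out mean by $\bar h - \mu_{-j} = (h_j - \mu_{-j})/(N+1)$, whence $|\bar h - \mu_{-j}| \le |h_j - \mu_{-j}|$ immediately, completing the pointwise inequality and hence $\rho = 1$.

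I expect the only mildly delicate step to be the $-\RMSE$ reduction: one must keep track that $f^*$ is the mean over all of $\sH_i$ (including $j$) while the score is evaluated against the leave-one-out set $\sH_i[-j]$, so the decomposition must be taken about $\mu_{-j}$ rather than $\bar h$, and the shrinkage factor $1/(N+1)$ is precisely what makes the full mean closer to $\mu_{-j}$ than the single held-out annotator is. The $\ACC$ case is purely combinatorial and should pose no obstacle beyond handling ties. No concavity or probabilistic argument is needed; everything is deterministic and pointwise.
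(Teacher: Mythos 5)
Your proof is correct. The $\ACC$ case is essentially the paper's own argument: both of you compare, after dropping the common normalization, the number of remaining annotators agreeing with $MV(x_i)$ against the number agreeing with $h_j(x_i)$, and observe that excluding $h_j$ removes one element from the latter set but none from the former; your bookkeeping with $c(v)-\1\{a=v\}$ versus $c(a)-1$ is just a more explicit rendering of the paper's set-cardinality comparison, with the same handling of ties. The $-\RMSE$ case, however, takes a genuinely different route. The paper argues via a three-step chain: drop the (strictly positive) term $(\bar h - h_j)^2$ from the leave-one-out sum, invoke the fact that the full mean minimizes the sum of squared errors over all of $\sH_i$, and note that the $j$-th term of the competing sum vanishes. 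You instead apply the parallel-axis decomposition about the leave-one-out mean $\mu_{-j}$, so both sides share the common variance term and the comparison collapses to $(\bar h - \mu_{-j})^2 \le (h_j - \mu_{-j})^2$, which follows from the exact shrinkage identity $\bar h - \mu_{-j} = (h_j - \mu_{-j})/(N+1)$. Your version buys an exact quantitative statement --- the full mean sits closer to the held-out group by precisely the factor $1/(N+1)$, so the gap between the two sums is exactly $N\bigl(1 - (N+1)^{-2}\bigr)(h_j-\mu_{-j})^2$ --- and it dispenses with the paper's case split on whether $h_j(x_i) = \bar h(x_i)$, since your inequality is non-strict and holds uniformly. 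The paper's chain, by contrast, is shorter and needs only the textbook minimization property of the mean, at the cost of proving a strict inequality it never actually uses (the indicator $W_{i,j}^f$ only requires $\ge$).
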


\begin{table*}[t]
\centering
\large
\begin{adjustbox}{width=0.98\textwidth}
\begin{tabular}{l|ccc|ccc|ccc|ccc|ccc}
\toprule
\multicolumn{16}{c}{\textbf{Discrete Annotation Tasks}} \\
\midrule
& \multicolumn{3}{c|}{\textbf{WAX} ($\varepsilon=0.1$)} & \multicolumn{3}{c|}{\textbf{LGBTeen} ($\varepsilon=0.2$)} & \multicolumn{3}{c|}{\textbf{MT-Bench} ($\varepsilon=0.2$)} & \multicolumn{3}{c|}{\textbf{Framing} ($\varepsilon=0.15$)} & \multicolumn{3}{c}{\textbf{CEBaB-A} ($\varepsilon=0.1$)} \\
\midrule
& \underline{Acc} & \underline{WR $\omega$} & \underline{AP $\rho$} & \underline{Acc} & \underline{WR $\omega$} & \underline{AP $\rho$} & \underline{Acc} & \underline{WR $\omega$} & \underline{AP $\rho$} & \underline{Acc} & \underline{WR $\omega$} & \underline{AP $\rho$} & \underline{Acc} & \underline{WR $\omega$} & \underline{AP $\rho$} \\
Gemini-Flash & 0.38 &                    0.38 &          0.69 &          0.54 &                     0.25 &          0.71 &         0.62 &         0.0 &          0.72 &         0.69 & \cellcolor{green!30}1.0 &          0.83 &               0.88 & \cellcolor{green!30}0.7 &             0.91 \\
  Gemini-Pro & 0.39 & \cellcolor{green!30}0.5 & \textbf{0.74} &          0.47 &                      0.0 &          0.67 &         0.62 &         0.0 &          0.76 &         0.79 & \cellcolor{green!30}1.0 &          0.91 &               0.91 & \cellcolor{green!30}0.9 &    \textbf{0.94} \\
      GPT-4o & 0.38 & \cellcolor{green!30}0.5 &          0.73 &          0.63 & \cellcolor{green!30}0.75 & \textbf{0.77} &         0.68 &         0.0 & \textbf{0.77} &         0.80 & \cellcolor{green!30}1.0 & \textbf{0.92} &               0.90 & \cellcolor{green!30}0.9 &             0.93 \\
 GPT-4o-mini & 0.24 &                     0.0 &          0.59 &          0.59 & \cellcolor{green!30}0.75 &          0.76 &         0.60 &         0.0 &          0.74 &         0.74 & \cellcolor{green!30}1.0 &          0.87 &               0.86 & \cellcolor{green!30}0.5 &              0.90 \\
   Llama-3.1 & 0.24 &                     0.0 &          0.57 &          0.54 &                      0.0 &          0.72 &         0.54 &         0.0 &          0.69 &         0.66 & \cellcolor{green!30}0.5 &           0.80 &               0.87 & \cellcolor{green!30}0.6 &             0.89 \\
  Mistral-v3 & 0.17 &                     0.0 &           0.50 &          0.58 &                     0.25 &          0.75 &         0.52 &         0.0 &          0.68 &         0.66 &                    0.25 &           0.80 &               0.78 &                     0.1 &             0.81 \\
\bottomrule
\toprule
\multicolumn{16}{c}{\textbf{Continuous and Textual Annotation Tasks}} \\
\midrule
& \multicolumn{3}{c|}{\textbf{SummEval} ($\varepsilon=0.2$)} & \multicolumn{3}{c|}{\textbf{10K Prompts} ($\varepsilon=0.15$)} & \multicolumn{3}{c|}{\textbf{CEBaB-S} ($\varepsilon=0.1$)} & \multicolumn{3}{c|}{\textbf{Lesion} ($\varepsilon=0.15$)} & \multicolumn{3}{c}{\textbf{KiloGram} ($\varepsilon=0.1$)} \\
\midrule
& \underline{Pears} & \underline{WR $\omega$} & \underline{AP $\rho$} & \underline{Pears} & \underline{WR $\omega$} & \underline{AP $\rho$} & \underline{Pears} & \underline{WR $\omega$} & \underline{AP $\rho$} & \underline{Pears} & \underline{WR $\omega$} & \underline{AP $\rho$} & \underline{Sim} & \underline{WR $\omega$} & \underline{AP $\rho$} \\
Gemini-Flash &  0.51 & 0.0 &          0.46 &              0.44 &                     0.31 &           0.67 &              0.75 & \cellcolor{green!30}0.6 &           0.82 &         0.70 &                     0.17 &          0.71 &             0.79 & \cellcolor{green!30}0.66 &          \textbf{0.61} \\
  Gemini-Pro &  0.47 & 0.0 &          0.44 &              0.33 &                     0.08 &           0.63 &              0.78 & \cellcolor{green!30}0.8 &           0.87 &         0.73 &  \cellcolor{green!30}1.0 & \textbf{0.81} &             0.77 &                     0.08 &          0.43 \\
      GPT-4o &  0.54 & 0.0 &          0.48 &              0.47 & \cellcolor{green!30}0.69 &           0.76 &              0.80 & \cellcolor{green!30}0.9 &   \textbf{0.90} &         0.67 &                      0.0 &          0.62 &             0.78 &                      0.2 & 0.53 \\
 GPT-4o-mini &  0.50 & 0.0 &          0.54 &              0.46 & \cellcolor{green!30}0.92 &   \textbf{0.80} &              0.79 & \cellcolor{green!30}0.9 &           0.89 &         0.72 & \cellcolor{green!30}0.67 &          0.73 &             0.78 &                     0.16 &          0.49 \\
   Llama-3.1 &  0.36 & 0.0 &          0.58 &              0.23 &                     0.15 &           0.67 &              0.78 & \cellcolor{green!30}0.6 &           0.85 &          -- &                      -- &           -- &              -- &                      -- &           -- \\
  Mistral-v3 &  0.12 & 0.0 & \textbf{0.62} &              0.28 &                     0.15 &           0.67 &              0.76 & \cellcolor{green!30}0.5 &           0.83 &          -- &                      -- &           -- &              -- &                      -- &           -- \\
\bottomrule
\end{tabular}
\end{adjustbox}
\caption{\textbf{Main Results (zero-shot) — Full Datasets:} For all tasks, we report a traditional LLM-human alignment measure, such as accuracy with the majority vote (Acc) for discrete tasks, Pearson's correlation (Pears) for continuous tasks, and average similarity (Sim) for textual tasks. Additionally, we present our proposed measures: the winning rate (WR $\omega$, the $\varepsilon$ value is stated next to the dataset name) and the average advantage probability (AP $\rho$). Bold values indicate the best-performing LLM according to $\rho$, while a light green background highlights $\omega \ge 0.5$.}
\label{tab:results_main}
\vspace{-0.8em}
\end{table*}

\section{Experimental Setup}

\subsection{Datasets}

We conducted experiments on ten diverse datasets, varying in size, number of human annotators, and types of annotators (crowd-workers, skilled annotators, or experts). Table~\ref{tab:iaa_main} provides information about these datasets, including inter-annotator agreement measures. We comprehensively review each of the ten datasets in Appendix~\ref{app:dataset_overview}.

The datasets span a broad range of tasks, including traditional NLP tasks like sentiment analysis, word-relation labeling, and summarization evaluation, as well as modern LLM-related tasks like conversation comparison, prompt quality assessment, and emotional support evaluation. Moreover, two datasets address vision-language tasks: skin lesion examination and abstract visual reasoning.

The selection of the datasets followed three principles: (1) covering diverse annotation types, including discrete, continuous, and free-text; (2) ensuring annotators have identifiers; and (3) requiring each item be annotated by multiple annotators.

\subsection{LLMs}
The six models that were used as candidate LLM annotators for our experiments are \textit{Gemini-1.5-Flash and Pro}\footnote{\url{https://deepmind.google/technologies/gemini/}} by Google DeepMind, \textit{GPT-4o and GPT-4o-mini}\footnote{\url{https://openai.com/index/hello-gpt-4o/}} by Open AI, \textit{Llama-3.1-7B-Instruct}\footnote{\url{https://www.llama.com/docs/model-cards-and-prompt-formats/llama3_1/}} by Meta AI, and \textit{Mistral-7B-Instruct-v0.3}\footnote{\url{https://writingmate.ai/blog/mistral-7b-v03-guide-and-details}} by Mistral AI. Llama-3.1 and Mistral-v3 do not have results on Lesion and KiloGram datasets because they are not able to process images.
The prompts used in our experiments are detailed in Appendix~\ref{sec:app_prompts}, and, where applicable, adhere to the annotation guidelines outlined in the papers describing the dataset.

In addition to the basic \textit{Zero-shot} strategy, we experimented with three advanced LLM-as-a-judge strategies \citep{li2024generation}: \textit{Few-shot} (also known as In-Context Learning), where the prompt includes four randomly sampled demonstrations (an input paired with its majority vote label); \textit{Chain-of-Thought (CoT)}, where the prompt instructs the LLM to reason step-by-step and provide an explanation before making a prediction; and \textit{Ensemble}, where the final prediction is determined by the majority label across an ensemble of LLMs and different prompting strategies \citep{nahum2024llms}.

\section{Results}

Table~\ref{tab:results_main} presents the performance of various LLMs across discrete, continuous, and free-text tasks. We report three key measures: traditional LLM-human alignment measures (accuracy, Pearson’s correlation, and similarity), the winning rate (WR, denoted as $\omega$), and the average advantage probability (AP, denoted as $\rho$). 
For each dataset, we selected $\varepsilon$ values based on the type of annotators (as indicated in Table~\ref{tab:iaa_main}): experts ($\varepsilon=0.2$), skilled annotators ($\varepsilon=0.15$), and crowd-workers ($\varepsilon=0.1$). See the discussion in \S\ref{sub:threshold} for an explanation of these choices. Below, we summarize our main findings:

\begin{table*}[!t]
\centering
\large
\begin{adjustbox}{width=0.98\textwidth}
\begin{tabular}{l|ccc|ccc|ccc|ccc|ccc}
\toprule
\multicolumn{16}{c}{\textbf{3 Annotators and 100 Instances Subsets} (mean values computed over 100 bootstraps)} \\
\midrule
& \multicolumn{3}{c|}{\textbf{WAX} ($\varepsilon=0.1$)} & \multicolumn{3}{c|}{\textbf{LGBTeen} ($\varepsilon=0.2$)} & \multicolumn{3}{c|}{\textbf{MT-Bench} ($\varepsilon=0.2$)} & \multicolumn{3}{c|}{\textbf{SummEval} ($\varepsilon=0.2$)} & \multicolumn{3}{c}{\textbf{10K Prompts} ($\varepsilon=0.15$)} \\
\midrule
& \underline{Acc} & \underline{WR $\omega$} & \underline{AP $\rho$} & \underline{Acc} & \underline{WR $\omega$} & \underline{AP $\rho$} & \underline{Acc} & \underline{WR $\omega$} & \underline{AP $\rho$} & \underline{Pears} & \underline{WR $\omega$} & \underline{AP $\rho$} & \underline{Pears} & \underline{WR $\omega$} & \underline{AP $\rho$} \\
Gemini-Flash & 0.37 &                    0.08 &          0.66 &                      0.55 &                     0.02 &                    0.74 &                     0.63 &                     0.0 &                   0.72 &  0.47 &                      0.0 &                    0.48 &                          0.36 &                       0.09 &                       0.66 \\
\null\quad +  4-shots & 0.41 &                    0.19 &           0.70 &                      0.66 & \cellcolor{green!30}0.61 &           \textbf{0.83} &                     0.61 &                     0.0 &                   0.73 &  0.60 &                     0.41 &                    0.76 &                          0.40 &   \cellcolor{green!30}0.58 &                       0.76 \\
\null\quad + CoT & 0.38 &                    0.09 &          0.69 &                      0.47 &                      0.0 &                     0.70 &                     0.63 &                    0.01 &                   0.76 &  0.47 &                      0.0 &                    0.46 &                          0.37 &                       0.01 &                       0.61 \\
\midrule
      Gemini-Pro & 0.40 &                    0.15 &           0.70 &                      0.50 &                      0.0 &                    0.69 &                     0.62 &                    0.01 &                   0.76 &  0.42 &                      0.0 &                    0.43 &                          0.28 &                       0.01 &                       0.61 \\
\null\quad + 4-shots & 0.39 &                    0.17 &          0.69 &                      0.55 &                     0.04 &                    0.73 &                     0.63 &                    0.03 &                   0.77 &  0.57 & \cellcolor{green!30}0.59 &                    0.77 &                          0.24 &                        0.0 &                        0.60 \\
\null\quad + CoT & 0.36 &                    0.09 &          0.68 &                      0.48 &                      0.0 &                     0.70 &                     0.58 &                     0.0 &                   0.76 &  0.49 &                      0.0 &                    0.56 &                          0.32 &                       0.01 &                       0.64 \\
\midrule
          GPT-4o & 0.37 &                    0.17 &          0.69 &                      0.65 & \cellcolor{green!30}0.55 &                    0.82 &                     0.69 &                    0.16 &                   0.78 &  0.52 &                      0.0 &                    0.49 &                          0.41 &                       0.27 &                       0.73 \\
\null\quad + 4-shots & 0.39 &                    0.15 &          0.69 &                      0.55 &                     0.03 &                    0.75 &                     0.66 &                    0.13 &                   0.78 &  0.58 &                     0.28 &                    0.74 &                          0.38 &                       0.16 &                       0.72 \\
\null\quad + CoT & 0.37 &                    0.11 &           0.70 &                      0.65 &                     0.43 &                    0.81 &                     0.65 &                     0.4 &          \textbf{0.79} &  0.58 &                     0.03 &                    0.67 &                          0.37 &                       0.43 &                       0.74 \\
\midrule
     GPT-4o-mini & 0.27 &                     0.0 &          0.59 &                      0.59 &                      0.1 &                    0.78 &                     0.60 &                     0.0 &                   0.73 &  0.49 &                      0.0 &                    0.53 &                          0.36 &                       0.48 &                       0.76 \\
\null\quad + 4-shots & 0.30 &                    0.01 &          0.62 &                      0.60 &                     0.12 &                    0.77 &                     0.61 &                     0.0 &                   0.74 &  0.60 & \cellcolor{green!30}0.77 &           \textbf{0.79} &                          0.42 &   \cellcolor{green!30}0.74 &              \textbf{0.78} \\
\null\quad + CoT & 0.33 &                     0.0 &          0.66 &                      0.57 &                     0.06 &                    0.75 &                     0.59 &                     0.0 &                   0.72 &  0.56 &                      0.0 &                     0.60 &                          0.32 &                       0.44 &                       0.74 \\
\midrule
Ens. Geminis & 0.42 &                    0.21 &          0.71 &                      0.56 &                     0.11 &                    0.77 &                     0.66 &                    0.03 &                   0.76 &  0.48 &                      0.0 &                    0.55 &                          0.33 &                       0.06 &                       0.67 \\
   Ens. GPTs & 0.38 &                    0.05 &          0.67 &                      0.61 &                     0.19 &                    0.79 &                     0.60 &                     0.0 &                   0.73 &  0.58 &                     0.04 &                    0.66 &                          0.39 &   \cellcolor{green!30}0.64 &                       0.77 \\
Ens. All & 0.44 &                    0.24 & \textbf{0.73} &                      0.63 &                     0.37 &                     0.80 &                     0.61 &                    0.01 &                   0.74 &  0.58 &                     0.02 &                    0.66 &                          0.39 &                       0.41 &                       0.74 \\
\bottomrule
\end{tabular}
\end{adjustbox}
\caption{\textbf{Results -- Advanced LLM Judges:} Each data point is calculated using a bootstrap of 100 combinations of three annotators and one hundred instances.
\textit{Ens.} stands for ``Ensemble''. Please see the caption of Table~\ref{tab:results_main}.}
\label{tab:results_advanced}
\vspace{-0.8em}
\end{table*}

\paragraph{LLMs can sometimes replace humans.} 
Table~\ref{tab:results_main} shows that many LLMs pass the alt-test across various datasets. While in two datasets (MT-Bench, and SummEval), none of the LLMs pass the test, in four (Framing, CEBAB-A, CEBaB-S and Lesion), almost all LLMs achieve $\omega \geq 0.5$. In the free-text dataset KiloGram, only Gemini-Flash passes the test. The results suggest that \textit{in many scenarios, employing LLMs can be an alternative to recruiting additional human annotators.}

However, this positive news does not imply that LLMs can always replace human annotators. The success of LLMs is nuanced and aspect-dependent. In Table~\ref{tab:results_partitions} in the Appendix, we analyze three datasets, breaking them down into sub-annotation tasks corresponding to different aspects. For instance, in the SummEval dataset (which will be discussed later), summary annotations are divided into four aspects: coherence, consistency, fluency, and relevance. Notably, each aspect may require varying levels of expertise and capabilities, and indeed, the performance of LLMs varies accordingly.

In the Lesion dataset, which involves annotating five aspects of skin lesion images, all LLMs pass our test on color-related aspects (e.g., identifying the number of colors or the presence of a bluish glow) but struggle with shape-related aspects, such as assessing asymmetry or border irregularity. In the LGBTeen dataset, all LLMs excel in the sensitivity aspect, while for five other aspects (out of ten), only one or two LLMs pass the test. In the remaining four aspects, all LLMs fail. Notably, the aspects where LLMs struggle often require higher emotional intelligence or contextual understanding (e.g., the Mental and Completeness aspects; see \citet{lissak-etal-2024-colorful}). Finally, in SummEval, most LLMs pass the test for two aspects, Coherence and Relevance, but fail on the other two. 

Our results demonstrate that test success depends on the dataset and annotation aspect, with LLMs often failing to pass it. This emphasizes the relevance of the alt-test: researchers cannot simply rely on LLM annotations without justifying this choice. 

\paragraph{Traditional measures strongly correlate with the average advantage probability.}
In addition to the statistical procedure, our method enables comparing LLM judges using the average advantage probability, $\rho$. In subsection \S\ref{sub:comparing}, we outlined the desired properties of $\rho$, such as its interpretability (as it directly represents the likelihood of the LLM being as good as or better than a random annotator) and its flexibility, allowing it to be applied to various types of annotation tasks. 

Notably, in almost all datasets, the top-ranked LLM is the same based on $\rho$ values and the traditional measures. Furthermore, in discrete tasks, the ranking of models based on Accuracy and $\rho$ shows a strong correlation, with an average Kendall $\tau$ value of 0.92. Other tasks also correlate highly, with an average Kendall $\tau$ value of 0.84, except for SummEval, which shows a negative correlation. We discuss this anomaly in Appendix \ref{sub:summeval}, which can be partially attributed to label imbalance (see Appendix~\ref{sub:imbalanced} for a solution to handling imbalance)

\begin{figure*}[t]
    \centering
    \includegraphics[width=0.975\textwidth]{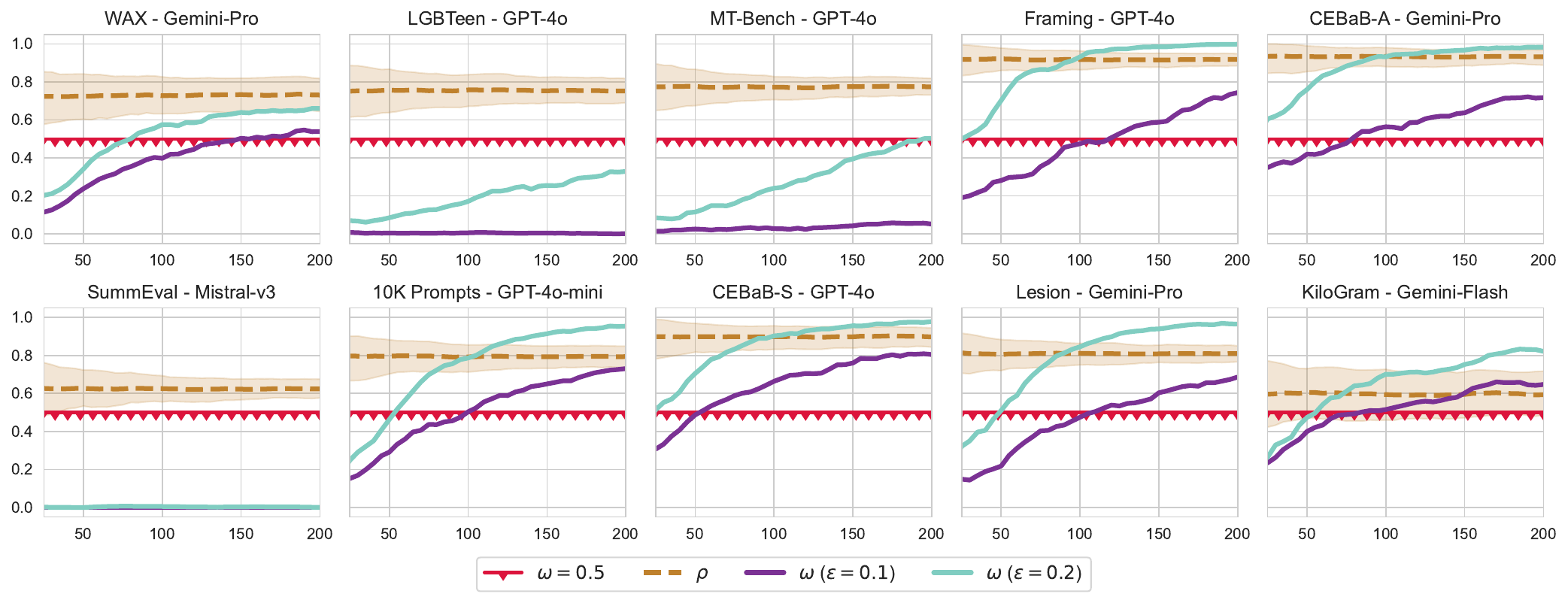}
    \caption{\textbf{Analysis of the Impact of the Number of Items:} Each data point is calculated using a bootstrap of 100 combinations of three annotators and $n$ items (x-axis). The y-axis shows the winning rates ($\omega$, solid lines) for $\varepsilon=0.1$ (purple) and $\varepsilon=0.2$ (turquoise). In addition, it presents the average advantage probability ($\rho$, dashed brown line) with its empirical 0.9 confidence intervals. The subplot title indicates the examined LLM.}
    \label{fig:n_items}
\vspace{-0.8em}
\end{figure*}

\paragraph{Few-Shot improves LLM-human alignment.}
Table~\ref{tab:results_main} indicates that the closed-source LLMs (GPTs and Geminis), outperform open-source LLMs.\footnote{Further experiments across varying model sizes are necessary to support broader claims about model openness.}
In discrete tasks, GPT-4o and Gemini-Pro consistently are the best-performing LLMs, while in continuous tasks, no single model emerges as the clear winner. However, Table~\ref{tab:results_main} reports only zero-shot experiments. Thus, we also conducted experiments using three other strategies: few-shot, CoT, and ensemble. The results are presented in Table~\ref{tab:results_advanced} and are based on 100 bootstraps of three annotators and 100 randomly sampled instances from five datasets. The reduced sample size was chosen to minimize computational costs\footnote{We annotated a maximum of 300 instances per dataset, which were then used for bootstrapping.} and primarily to reflect practical constraints better, as researchers are unlikely to annotate thousands of instances for testing whether the LLM is a good judge.

As shown in Table~\ref{tab:results_advanced}, the few-shot approach (with four demonstrations) improved the performance of nearly all LLM judges. Importantly, two few-shot LLMs achieved $\omega \ge 0.5$ on SummEval, a result not observed in the zero-shot setting. This success can be attributed to the demonstrations in the prompt, which helped align the LLMs' scoring distributions more closely with the human distributions. In contrast, the CoT methodology led to a decline in performance in many cases (45\%). Finally, the ensemble method did not improve the few-shot approach without ensembling.

\subsection{The Number Of Instances}
\label{sub:instances}

To help researchers reduce the costly need for manual annotations, we propose a statistical procedure that requires only a subset of such annotations and can verify whether an LLM can be used instead. This naturally leads to the question: how many annotated instances are needed for a reliable test? 
To answer this, we present a bootstrap analysis in Figure~\ref{fig:n_items} illustrating how the number of instances impacts our measures for the best-performing LLM (according to $\rho$) in each dataset. 

As shown, the winning rate $\omega$ strongly depends on the number of instances. This is because $\omega$ reflects the number of rejected hypotheses (i.e., the number of annotators the LLM wins), and more instances increase the power of the statistical test and the likelihood of rejecting a false null hypothesis (the human wins). In contrast, since $\rho$ does not involve hypothesis testing, it is not affected \textit{on expectation} by the number of instances. Yet, increasing the number of instances reduces the variance of $\rho$ (since it is a mean of means), making it a more robust measure for comparing LLM judges. 

Regarding the recommended number, beyond the minimum requirement of 30 instances to satisfy the normality assumption of the $t$-test, Figure~\ref{fig:n_items} shows that for $\varepsilon=0.2$, in most cases, the LLM begins to pass the test before annotating 100 instances, and in half even before 50 instances. With $\varepsilon=0.1$ the alt-test requires more instances, typically double the amount needed for $\varepsilon=0.2$, between 100 and 150. Yet, in three datasets (LGBTeen, MT-Bench, and SummEval), the LLM fails to pass the test regardless of the number of instances. While the exact number may vary depending on the task, the number of annotators, and the $\varepsilon$ value, our analysis highlights a promising finding: \textit{only a modest subset of annotations is required.} 

Finally, we refer readers to the simulation-based analysis in Appendix~\ref{app:simulations}, which provides intuition on the number of instances required under different conditions, such as the number of categories, and the reliability of the annotators or the LLM.

\section{Conclusion}
\label{sec:conclusion}

Science advances through systematic observation, precise measurement, and the rigorous validation of hypotheses. It is no coincidence that Pearson famously claimed statistics to be \textit{``the grammar of science''}. As results and findings of studies increasingly rely on LLMs instead of human annotators, extra care is needed to uphold scientific rigor. 

In this paper, we proposed a statistical procedure to justify using LLM annotations in research studies, the alt-test, which is simple and requires minimal effort. As demonstrated in our analysis, researchers can recruit a small group of annotators (at least three) to annotate a subset of 50 to 100 examples, depending on the complexity of the task. 

Appendix~\ref{faq} provides a list of frequently asked questions about our procedure, along with answers and best practices. Then, in Appendix~\ref{sec:discussion}, we further discuss and analyze additional aspects of our procedure, like the impact of $\varepsilon$ and the choice of human annotators. Finally, in Appendix~\ref{sec:advanced}, we propose modifications to our procedure to address advanced scenarios: handling imbalanced labels (\S\ref{sub:imbalanced}), benchmarking against a single expert annotator (\S\ref{sub:single_expert}), incorporating annotator quality scores (\S\ref{sub:quality}), respecting minotiy opinions in subjective annotation tasks (\S\ref{sub:subjective}), and testing whether LLMs outperform humans (\S\ref{sub:gold_label}). 

We encourage researchers to adopt our procedure to ensure more reliable and transparent evaluations of LLMs, and careful practices to leverage their annotations in NLP research and other fields.
\section{Limitations}

\paragraph{Data contamination}
One limitation of our experiments is the potential for data contamination, where datasets used in our experiments may overlap with the training data of the evaluated LLMs. Popular datasets such as SummEval and MT-Bench, commonly used for benchmarking LLM-as-judges, are publicly available and might have been included in the training data of some LLMs. Notice that most of the datasets we used are recent (published after 2022) and not widely known, with fewer than 50 citations each. Additionally, one of our datasets, LGBTeen, is available only upon request. Hopefully, this lowers the risk of data contamination.

\paragraph{High disagreement among human annotators}
High disagreement among human annotators can arise from various factors, such as untrained crowd workers, annotators who are not suited for the task, unclear or poorly designed annotation guidelines, or the inherently subjective nature of the task itself. In such cases, and as demonstrated in our simulation-based analysis in Appendix~\ref{app:simulations}, it is less likely that the LLM-as-a-judge will succeed in passing our test. The procedure compares the LLM with each annotator to test alignment with the remaining annotators. When the remaining annotators are inconsistent, this introduces high variance in determining who aligns better (the LLM or the excluded annotator). Under these conditions, the hypothesis test is unlikely to reject the null hypothesis, and the LLM's winning rate remains low.

This property of our procedure can be desirable, as it may help researchers identify potential issues with the annotation process, such as unclear guidelines, unqualified annotators, or the inherent subjectivity of the task. Traditional measures would similarly yield low scores in such cases.

For inherently subjective tasks, we advocate for developing alternative methods to assess the quality of human annotations, where disagreements are a feature rather than a flaw \citep{basile2021we, uma2021learning} and methods to evaluate the LLM-as-a-judge's ability to represent a spectrum of opinions. Finally, we refer readers to \S\ref{sub:subjective} in the Appendix, where we discuss modifications of our procedure to better account for subjectivity and emphasize minority opinions.

\paragraph{Comparing against weak human annotators} 
A potential misuse of our procedure is intentionally comparing the LLM against weak human annotators to demonstrate that the LLM outperforms them and justify its use. In cases where human annotators are intentionally weak, with low inter-annotator agreement, the LLM might pass the test, as shown in our simulation-based analysis in Appendix~\ref{app:simulations}. To ensure sound and transparent testing, researchers should always report the IAA of the human annotators. If the IAA is low, the conclusions drawn from the alt-test are less reliable, and to compensate for this, researchers must use small values of $\varepsilon \le 0.1$ and annotate more instances.

In the single expert scenario (see Appendix~\ref{sub:single_expert}), the LLM is compared against non-experts, and both are tested for alignment with a single expert. If the non-experts are particularly weak (e.g., inconsistent or unqualified), the LLM may appear to outperform them, and our procedure cannot fully prevent such misuse. Science, however, is built on transparency and trust. We strongly encourage researchers to disclose detailed information about the annotators and to publish the human annotations, allowing others to reproduce and validate the results. As discussed in \S\ref{sec:discussion}, the expertise of the human annotators directly impacts the reliability and authority of the procedure. Readers and reviewers should critically assess the choice of annotators, and if the annotators are deemed unsuitable, the study’s results should be taken with a grain of salt.

\bibliography{custom}

\appendix

\renewcommand \thepart{}
\renewcommand \partname{}
\mtcsettitle{parttoc}{}
\addcontentsline{toc}{section}{Appendix} 
\part{Appendix} 
\parttoc 

\clearpage


\section{Frequently Asked Questions}
\label{faq}

\medskip\noindent\textbf{Q: How should I report the alt-test results?}
\\ \textcolor{Green}{\textbf{A:}} We recommend the following best practices for applying and reporting the alt-test results: 
\begin{enumerate}[leftmargin=*]
    \item Provide details about the human annotators, including their profile, level of expertise, annotation guidelines, training, and the overall process.
    \item Explain the rationale behind the choice of $\varepsilon$ (see the relevant question below for guidance).
    \item For selecting the number of instances, see the relevant question below.
    \item Report a measure of reliability for the human annotators, such as inter-annotator agreement (e.g., Cohen’s $\kappa$) or correlation measures. This is essential to ensure that the annotators are sufficiently reliable and the $\varepsilon$ value is appropriate.
    \item For selecting the LLM-as-a-judge, report the average advantage probability ($\rho$), clearly state which LLMs are compared, and provide their corresponding $\rho$ values.
    \item Report the winning rate of the selected LLM.
\end{enumerate}

\medskip\noindent\textbf{Q: Why not use an Inter-Annotator Agreement (IAA) measure?}
\\ \textcolor{Green}{\textbf{A:}} Our procedure is a type of IAA, but unlike traditional IAA measures (such as Cohen’s kappa), which assess agreement among a group of annotators, our goal is to \textit{compare} the LLM to the group to determine whether it can replace them.

\medskip\noindent\textbf{Q: Why not use a traditional measure such as F1 score or accuracy?}
\\ \textcolor{Green}{\textbf{A:}} To compare the LLM to human annotators and to address the `replacement question' (i.e., whether the LLM can be used instead of the annotators), one might consider traditional LLM-human alignment measures (e.g., the F1 score or a correlation between the LLM and the majority vote label). However, answering the replacement question requires statistical rigor. Even though a statistical test can check if the traditional measure exceeds a predefined threshold, there is no universal standard for setting it, which may vary across datasets and setups. Additionally, traditional measures only evaluate whether the LLM matches human performance, not whether it provides a better alternative.

In contrast, our procedure involves statistical practices and provides clear passing criteria. Most importantly, it directly answers the replacement question by using a leave-one-out approach -- excluding one annotator at a time and assessing whether the LLM better represents the remaining annotators than the excluded one.

\medskip\noindent\textbf{Q: Why do you recommend at least three human annotators and not two?}
\\ \textcolor{Green}{\textbf{A:}} While our procedure can be used with two annotators, we believe it is less reliable. With only two, the procedure simply checks whether the LLM aligns more with one annotator than the other, lacking a consensus signal. This makes results more sensitive to individual biases. With at least three annotators, the procedure better evaluates whether the LLM represents the broader group. Obviously, the more annotators, the better, as this increases the reliability, reduces the influence of individual biases, and provides a more robust consensus signal.

\medskip\noindent\textbf{Q: What if I have annotations from a single human annotator?}
\\ \textcolor{Green}{\textbf{A:}} Since our procedure requires at least two annotators, we recommend recruiting additional annotators for the alt-test. However, if the single annotator is an expensive expert (or you trust their annotations) and cannot recruit others at the same expertise level, you can instead recruit lower-quality annotators and test who better represents the expert: the LLM or the newly recruited annotators. We refer to this as the single-expert scenario and provide a detailed discussion in Appendix~\ref{sub:single_expert}.

\medskip\noindent\textbf{Q: How do I select the $\varepsilon$ value?}
\\ \textcolor{Green}{\textbf{A:}} We discuss this topic in detail in \S\ref{sub:threshold}. Note that $\varepsilon$ is the cost-benefit hyperparameter, where higher values indicate greater efficiency advantages of the LLM. As a rule of thumb, for expert annotators (reliable but expensive, sometimes inaccessible), set $\varepsilon=0.2$. For skilled annotators (e.g., undergraduate students, trained workers, etc., who are less reliable than experts), set $\varepsilon=0.15$. For crowd-workers, set $\varepsilon=0.1$. Moreover, the choice of $\varepsilon$ should depend on the reliability of the human annotators. When IAA is low, a smaller $\varepsilon$ should be used. The simulation-based analysis in Appendix~\ref{app:simulations} can help understand the effect of IAA on the alt-test, and guide the selection of an appropriate $\varepsilon$.

\medskip\noindent\textbf{Q: How many instances should I annotate?}
\\ \textcolor{Green}{\textbf{A:}} We discuss this topic in detail in \S\ref{sub:instances}. To ensure the normality assumption of the t-test holds, you should have at least 30 instances. Our analysis shows that annotating between 50 and 100 instances is sufficient in most cases. Obviously, the more annotated instances, the better, as this increases the statistical power of the t-test and the likelihood of the LLM passing the alt-test. We encourage researchers to conduct simulation analyses similar to the one presented in Appendix~\ref{app:simulations} to help determine the required number of instances. The simulation code is available in our GitHub repository. It can be customized by adjusting parameters such as the number of categories or the expected IAA to reflect the characteristics of their data.

\medskip\noindent\textbf{Q: What if I have fewer than 30 annotated instances per annotator?}
\\ \textcolor{Green}{\textbf{A:}} In this case, the normality assumption of the t-test does not hold, so a non-parametric test, such as the Wilcoxon signed-rank test, should be used instead. Still, we strongly recommend having annotators label additional instances. See the next question for an alternative approach.

\medskip\noindent\textbf{Q: I have two sets of human annotators. Can I combine annotators from the first set with the second set to increase the number of instances per annotator?}
\\ \textcolor{Green}{\textbf{A:}} 
If you have two separate sets of annotators who annotated different, non-overlapping instances, you can artificially increase the number of instances per annotator by pairing them across sets. 
For example, suppose Set 1 consists of three annotators who annotated 20 instances, and Set 2 consists of another three annotators who annotated a different set of 20 instances. You can combine an annotator from Set 1 with an annotator from Set 2, treating them as a single ``combined annotator'' with 40 instances. To improve robustness, you can form multiple such pairs and report the average winning rate across different pairing combinations.

While this approach can increase the number of annotated instances per annotator, it is not ideal. The best practice is still to annotate more instances. Combining annotators like this may also increase the variance of the statistics (since we combine instances annotated by different distributions). This could lead to higher p-values, making the LLM fail.


\medskip\noindent\textbf{Q: What if I care about ranking rather than exact scores?}
\\ \textcolor{Green}{\textbf{A:}} In some cases, the exact match between LLM predictions and human annotations may not be as important as the relative ordering of instances. For example, if the goal is to ensure that higher-scored instances by humans are also ranked higher by the LLM. To evaluate this, we can adapt our procedure to operate on ranks instead of raw scores. Specifically, we create a separate ranked list for each human annotator and the LLM by assigning ranks to instances based on their annotated scores (e.g., the lowest score gets rank 1). We then apply our procedure to these ranks, replacing the original annotations. The alignment scoring function can be negative RMSE, computed for each instance based on the difference between its rank assigned by the LLM and its rank assigned by the human annotator.

\medskip\noindent\textbf{Q: What if I have a skewed label distribution?}
\\ \textcolor{Green}{\textbf{A:}} In Appendix~\ref{sub:imbalanced}, we discuss modifications to our procedure to account for label imbalance.

\medskip\noindent\textbf{Q: How to test if the LLM can be used in several environments or domains?}
\\ \textcolor{Green}{\textbf{A:}} When evaluating whether an LLM-as-a-judge can be used across multiple environments or domains, it is important to evaluate it in each setting independently while also controlling for the overall False Discovery Rate (FDR). For example, suppose we have five domains, each with three human annotators, resulting in 15 comparisons between the LLM and humans. The FDR-controlling procedure should be applied to the 15 p-values to ensure statistical rigor. Additionally, the winning rate should be computed separately for each environment, and the results should be summarized as: \\
\textit{``The LLM passes the alt-test in X out of 5 domains.''}

In cases of hundreds of environments, collecting labeled data from at least three annotators per environment may be impractical. This remains an open challenge, but it offers promising directions for future work, such as sampling representative environments rather than testing all of them.

\medskip\noindent\textbf{Q: How to test who better represents human experts? LLMs or crowd-workers?}
\\ \textcolor{Green}{\textbf{A:}} We discuss this scenario in Appendix~\ref{sub:single_expert}.

\medskip\noindent\textbf{Q: How to test whether LLMs outperform humans?} (and not whether they can replace them)?
\\ \textcolor{Green}{\textbf{A:}} We discuss this scenario in Appendix~\ref{sub:gold_label}.

\medskip\noindent\textbf{Q: What if I trust one annotator more than the others?}
\\ \textcolor{Green}{\textbf{A:}} In Appendix~\ref{sub:quality}, we discuss simple modifications to our procedure to account for variations in annotator quality.
\section{Discussion}
\label{sec:discussion}

The goal of this section is to discuss factors that influence the outcomes of the alt-test: the number of annotated instances (which was already discussed in \S\ref{sub:instances}), the value of the cost-benefit trade-off hyperparameter $\varepsilon$ (\S\ref{sub:threshold}), and the profile of the human annotators against whom we compare the LLM (\S\ref{sub:humans}). In addition, we also present a case study analysis of the SummEval dataset (\S\ref{sub:summeval}).

\begin{figure*}[!t]
    \centering
    \includegraphics[width=0.975\textwidth]{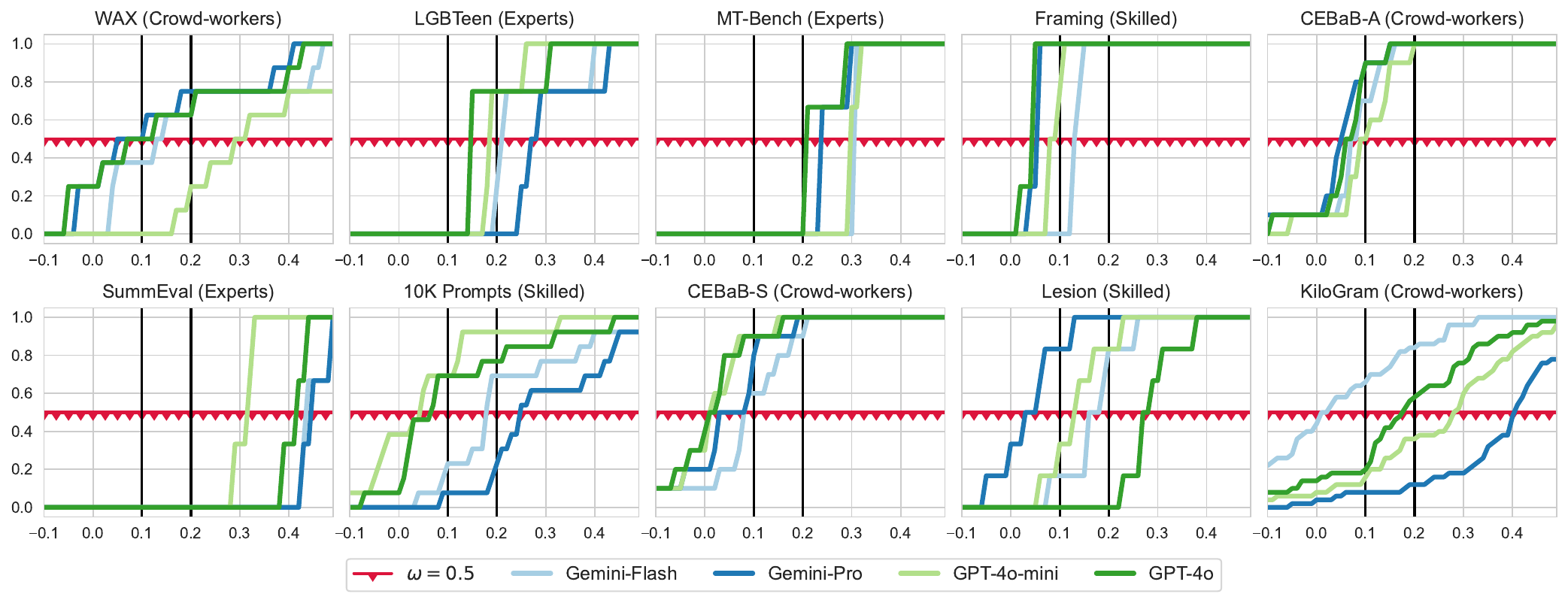}
    \caption{\textbf{Analysis of the Impact of Different $\varepsilon$ Values:} The x-axis represents different $\varepsilon$ values, while the y-axis shows the winning rate $\omega$ for four LLMs. If $\omega \ge 0.5$ (red line with triangles), the LLM passes the test, indicating it is a comparable alternative to human annotators when considering the cost-benefit tradeoff represented by $\varepsilon$. The annotator types are stated next to the dataset names.}
    \label{fig:thresholds}
\vspace{-0.8em}
\end{figure*}

\subsection{The Cost-benefit Hyperparameter}
\label{sub:threshold}

We wish to use LLMs instead of human annotators since they offer a much cheaper, faster, and less labor-intensive alternative. Therefore, we incorporated a cost-benefit hyperparameter into our procedure, $\varepsilon$, which lowers the necessary threshold the LLM must exceed (i.e., $\rho_j^h - \varepsilon$) to pass the alt-test. Generally, higher values of $\varepsilon$ are recommended when the cost and labor savings provided by the LLM are substantial. For instance, this applies when human annotators are highly expensive, require extensive and prolonged training, or when the task is time-consuming or particularly challenging (e.g., annotating complex relationships within lengthy documents). Conversely, smaller values of $\varepsilon$ are more appropriate for simple annotation tasks that untrained crowd-workers can complete.

To explore the relationship between different $\varepsilon$ values and the outcomes of the alt-test, as well as to provide guidelines for setting these values, we analyze the effect of $\varepsilon$ on the winning rate $\omega$ of four LLMs, as shown in Figure~\ref{fig:thresholds}. The strong monotonic increasing relationship between $\varepsilon$ and $\omega$, as presented by our analysis, enables us to identify the effective range of $\varepsilon$, which lies between 0.05 and 0.3. For $\varepsilon > 0.3$, all LLMs achieve $\omega \ge 0.5$ on every dataset (except SummEval, and Gemini-Pro in KiloGram) and pass the test. In contrast, for $\varepsilon < 0.05$, all LLMs achieve $\omega < 0.5$ on all datasets (except CEBaB-S) and fail the test.

From this analysis, we derive practical guidelines for selecting appropriate $\varepsilon$ values. First and foremost, any value can be valid if the researcher reasonably justifies their choice. This justification may involve several aspects, including the cost and effort of the annotation, the expertise of the annotators, the cost of annotation mistakes (which varies based on the application and domain), and the centrality of LLM annotations to the study.Moreover, based on the simulation-based analysis in Appendix~\ref{app:simulations}, we recommend selecting $\varepsilon$ values according to the quality of the human annotators. When annotator reliability is low (e.g., low IAA), a smaller $\varepsilon$ should be used. This aligns with the expectation that expert annotators tend to be more reliable than skilled annotators, who in turn are generally more reliable than crowd-workers.

As a rule of thumb, we recommend setting $\varepsilon$ to 0.2 when the annotators are trusted experts or highly reliable, and 0.15 when they are skilled annotators (e.g., undergraduate students or trained workers). If the annotators are crowd workers or have low reliability, $\varepsilon$ should be set to 0.1. In either case, the quality of the annotators must be high enough to ensure reliable annotations, as discussed in the following subsection. In our experiments, we selected $\varepsilon$ values based on the type of annotators (as indicated in Table~\ref{tab:iaa_main} and Figure~\ref{fig:thresholds}) and the recommendations above. 

\subsection{The Human Annotators Profile}
\label{sub:humans}

Recall that our procedure aims to justify replacement if \textit{the LLM aligns more closely with the collective distribution than an individual does}, where the collective distribution approximates the gold label distribution. This collective distribution is the most reliable and authoritative benchmark when the annotators are experts. Accordingly, we recommend using expert annotators whenever possible and, at the very least, highly trained crowd-workers. If researchers themselves are experienced with the task, they can serve as annotators. 

In \S\ref{sec:advanced}, we examine advanced topics related to human annotators. In \S\ref{sub:single_expert}, we address the scenario of a single expert annotator and propose a simple modification to our procedure. This scenario is particularly relevant when only one expert is available due to limited accessibility or the high cost of their annotations. This single expert annotates a small subset of instances, and their annotations are considered the gold labels (i.e., there is no collective distribution in this scenario). Our modification compares the LLM against non-experts to determine whether the LLM aligns more closely with the single expert than a non-expert does.

Additionally, in \S\ref{sub:quality}, we propose a modification to our procedure that incorporates a quality score for each human annotator. This score can be derived from various sources, such as qualification tests, and allows researchers to account for annotator expertise and reliability differences.

In \S\ref{sub:subjective}, we address the unique challenges of subjective annotation tasks, where minority opinions may carry importance. For example, in hate speech and offensive language detection, it is often a single sensitive annotator, frequently from an underrepresented group, who identifies the offensive content and deviates from the majority label. In such cases, we aim to adapt our method to account for and emphasize minority votes.

Finally, many studies aim not to use LLMs for annotations or judgments but to evaluate whether LLMs outperform humans. For example: \textit{``ChatGPT Out-scores Medical Students on Clinical Care Exam Questions''} \citep{stanford2023chatgpt}. In these cases, gold labels (e.g., exam answers) are available and are used for benchmarking. Moreover, we set $\varepsilon=0$ because there is no need to penalize humans. In \S\ref{sub:gold_label}, we discuss adapting the alt-test to rigorously answer if LLMs outperform humans.

\subsection{Case study: SummEval}
\label{sub:summeval}

Table~\ref{tab:results_main} reveals an anomaly in the SummEval dataset: Mistral-v3 achieves the highest $\rho$. Interestingly, Mistral’s traditional measure score (Pearson’s correlation) is low (0.12). This discrepancy warrants further investigation. As shown in Table~\ref{tab:results_partitions} in the Appendix, Mistral passes the test only for the Consistency aspect, with $\rho=0.87$, much higher than other LLMs (around 0.45). 

First, this demonstrates why each aspect should be tested separately.
Second, Table~\ref{tab:summeval_dist} in the Appendix, which reports the annotation distributions for SummEval, explains why Mistral's $\rho$ is so high: human annotations for Consistency are highly skewed, with the score ‘5’ assigned 89\% of the time. The only LLM with a similarly skewed prediction distribution is Mistral. Other LLMs predict ‘5’ only about 30\% of the time. However, as shown by Table~\ref{tab:summeval_dist}, few-shot helps LLMs adjust and skew their distributions, improving their alignment.

Noteworthy, unlike traditional measures (Pearson's and Spearman's correlations), our method captures this nuance in alignment. In \S\ref{sub:imbalanced} of the Appendix, we discuss label imbalance (like this case) and propose an adjustment to our method using Inverse Probability Weighting (IPW).

\section{Simulations}
\label{app:simulations}

The goal of this section is to explore the behavior of the alt-test further and demonstrate that it behaves as expected. Since the available datasets do not support the fine-grained analysis we seek, we turn to simulated data. Specifically, we focus on discrete annotation tasks and simulate both LLM and human annotations by controlling the level of noise in their annotations and the number of categories (classes). By varying the noise level, we can simulate LLM or human annotators ranging from poor to accurate, allowing us to analyze how many instances are required to test the LLM.

\begin{figure*}[!t]
    \centering
    \includegraphics[width=0.975\textwidth]{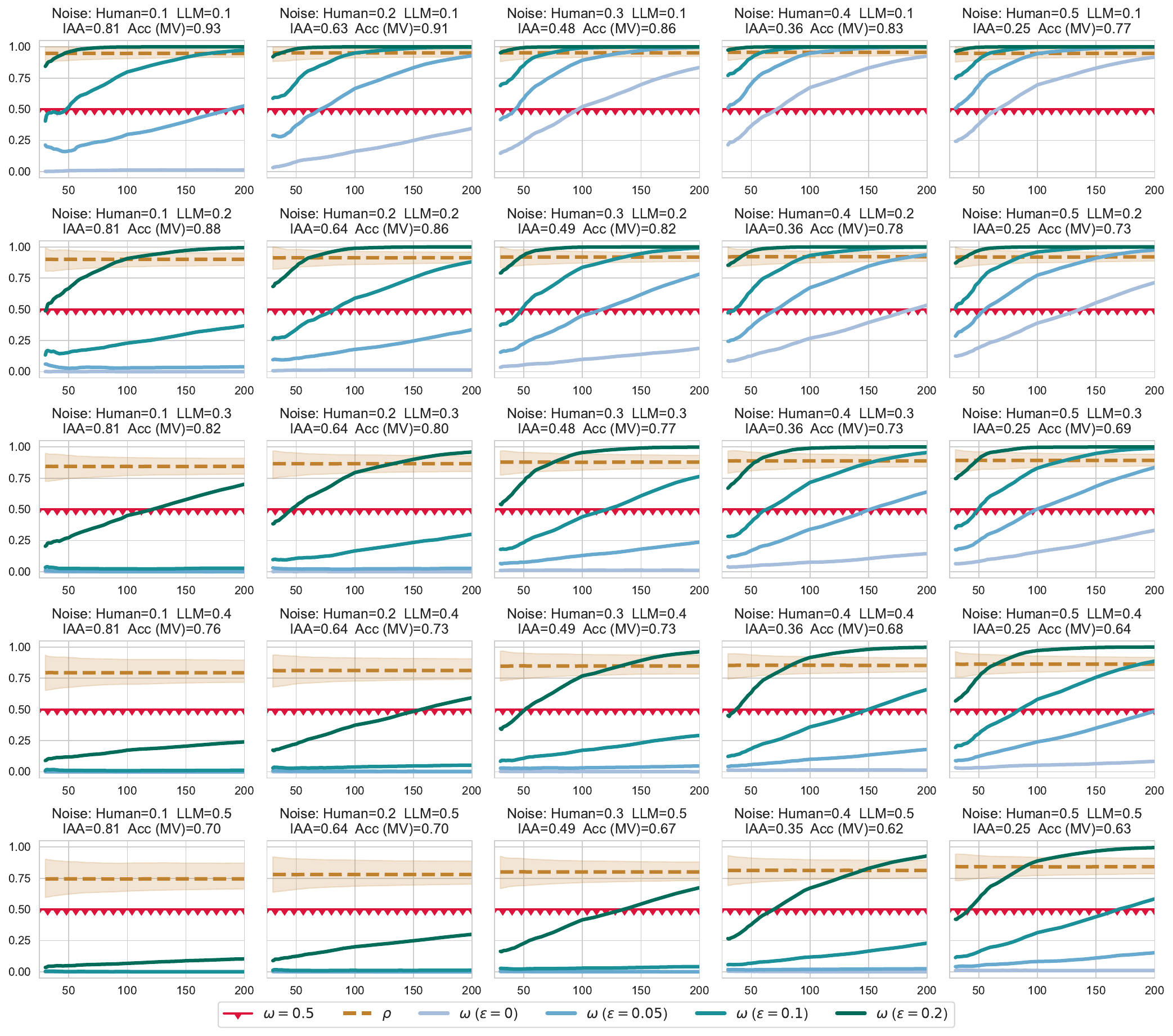}
    \caption{\textbf{Simulation-Based Analysis of Annotator and LLM Noise Dynamics:} Each data point is calculated using a bootstrap of 2500 combinations of different gold label priors, three annotators, $n$ items (x-axis), and $K=4$ categories. The y-axis shows the winning rates ($\omega$, solid lines) for four $\varepsilon$ values. In addition, it presents the average advantage probability ($\rho$, dashed brown line) with its empirical 0.9 confidence intervals. The subplot titles indicate the noise levels: $\eta_h$ increases from left to right, and $\eta_f$ increases from top to bottom. Each subplot also reports the IAA Cohen’s $\kappa$ for the human annotators and the accuracy of the LLM with the majority vote.}
    \label{fig:noise_simulation}
\vspace{-0.8em}
\end{figure*}

This simulation-based analysis can also help researchers determine how many human annotations they should collect for the alt-test, depending on their expectations about the quality of the LLM and the reliability of human annotators. The simulation code is available in our GitHub repository for further use by the research community. We begin by describing the simulation procedure and then proceed to analyze the results.

We simulate annotation data for a discrete labeling task with $n$ instances $\{x_1,\ldots,x_n\}$, $m$ human annotators $\{h_1,\ldots,h_m\}$, and an LLM $f$.
First, we draw the class prior vector over $K$ categories by sampling from a Dirichlet distribution with a symmetric parameter vector of ones, $\mathbf{1}_K=(1,...,1)$:
\begin{align*}
    \boldsymbol{\pi} &\sim \mathrm{Dirichlet}(\mathbf{1}_K)
\end{align*}
\noindent next, for each instance $x_i$, we sample its gold label:
\begin{align*}
    y(x_i) &\sim \mathrm{Categorical}\bigl(\boldsymbol{\pi}\bigr)
\end{align*}

For each human annotator $h_j$ and $i\in\{1,...,n\}$, we define a noisy annotation distribution in which with probability $1-\eta_h$ the true label $y(x_i)$ is chosen and with probability $\eta_h$ a label is drawn from  $\boldsymbol{\pi}$:
\begin{align*}
    \mathbf{p}^{h_j}_i 
      &= (1 - \eta_h)\,\mathbf{e}_{y(x_i)} \;+\;\eta_h\,\boldsymbol{\pi},\\
h_j(x_i)\;\;&\sim\;\mathrm{Categorical}\bigl(\mathbf{p}^{h_j}_i\bigr)
\end{align*}
\noindent where $\mathbf{e}_{y(x_i)}$ is the one-hot vector corresponding to the gold label $y(x_i)$. The LLM annotates every instance analogously, but with noise level $\eta_f$:
\begin{align*}
f(x_i)\;\;&\sim\;\mathrm{Categorical}\bigl(\mathbf{p}^{f}_i\bigr)
\end{align*}
The noise parameter $\eta$ controls reliability. In a task with $K=4$ categories, setting $\eta_h=0.1$ yields an IAA Cohen’s $\kappa\approx 0.8$ among human annotators, indicating high agreement.  In contrast, $\eta_h=0.5$ produces $\kappa\approx 0.2$, reflecting weak agreement.  By varying $\eta_h$ and $\eta_f$, we simulate annotators or LLMs with poor to perfect performance.

For each triplet of noise levels and number of categories $(\eta_h, \eta_f, K)$, the simulation is based on 50 independently generated datasets, each constructed according to the distributions defined above, with six human annotators and 500 instances per dataset. For each sample size considered (ranging from 30 to 200 instances), we perform 50 bootstrap samples within each dataset, by randomly selecting the specified size and three human annotators. Thus, each data point is aggregated over a total of 2,500 bootstraps (50 datasets $\times$ 50 bootstraps), providing stable and reliable estimates.

\begin{figure*}[!t]
    \centering
    \includegraphics[width=0.975\textwidth]{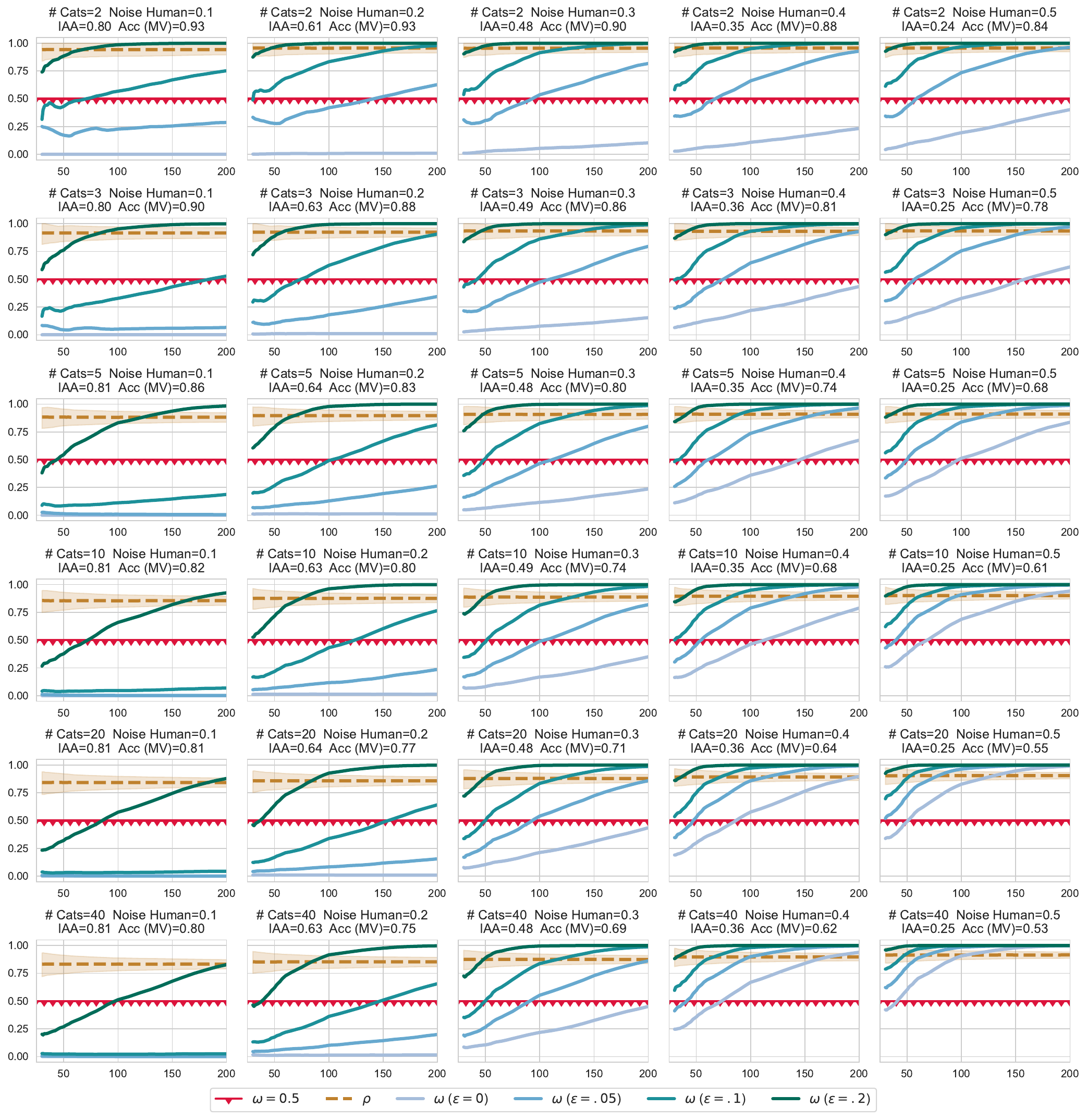}
    \caption{\textbf{Simulation-Based Analysis of the Number of Categories:}  Please see the caption of Figure~\ref{fig:noise_simulation}. We set $\eta_f=0.2$. The subplot titles indicate the human noise $\eta_h$, which increases from left to right, and the number of categories $K$, which increases from top to bottom.}
    \label{fig:categories_simulation}
\vspace{-0.8em}
\end{figure*}

The simulation results are presented in Figures~\ref {fig:noise_simulation} and~\ref{fig:categories_simulation}. The first figure shows how varying both $\eta_h$ and $\eta_f$ (i.e., the quality of the human annotators and the LLM) affects the behavior of the alt-test when $K=4$. The second figure shows how varying $K$ (the number of categories) and $\eta_h$ affects the alt-test when $\eta_f=0.2$. For both figures, we report the winning rate for four values of $\varepsilon$ and the average advantage probability, along with 0.9 empirical CIs. We also report the IAA Cohen’s $\kappa$, representing the quality of the human annotators and the accuracy of the LLM with the human majority vote, representing the quality of the LLM. 

\paragraph{Stronger LLM requires fewer instances.} As shown in Figure~\ref{fig:noise_simulation}, the larger the gap in noise levels in favor of the LLM (i.e., moving down within the same column of subfigures, with $\eta_f > \eta_h$), the fewer instances are needed, as desired. The LLM can pass the test even for smaller $\varepsilon$ values, including $\varepsilon = 0$, when the gap is large enough (greater than 0.3). This desirable behavior demonstrates that the alt-test reliably detects when the LLM is genuinely better than human annotators.
When they have the same noise level ($\eta_h = \eta_f$, diagonal subfigures), the LLM passes the test for $\varepsilon = 0.2$ (a relatively large value) with fewer than 50 instances, but not for stricter thresholds ($\varepsilon < 0.1$).

\paragraph{Larger noise requires more instances.}
When fixing a noise gap ($\eta_f-\eta_h$), the larger both noises are, the more instances are required for passing the alt-test. This is desirable, as higher noise requires more statistical power. As a result, the alt-test discourages comparisons between low-quality LLMs and low-quality annotators, and instead favors comparisons to high-quality human annotators.

\paragraph{Stronger human annotators require more instances.}
When the human annotations are less noisy (i.e., moving left within the same row of subfigures in Figure~\ref{fig:noise_simulation}), leading to higher reliability and greater IAA values, it becomes harder for the LLM to pass the alt-test, and more instances are required. This is expected, as high-quality annotators provide a stronger baseline. However, this should not incentivize researchers to use weak annotators intentionally. To ensure sound and transparent testing, researchers should always report the IAA of the human annotators. If the IAA is low, the conclusions drawn from the alt-test are less reliable, and to compensate for this, researchers must use small values of $\varepsilon \le 0.1$ and annotate more instances.

\paragraph{The impact of the number of categories.}
Figure~\ref{fig:categories_simulation} illustrates how the behavior of the alt-test varies with the number of categories, under a fixed LLM noise level ($\eta_f = 0.2$) and varying human annotator noise levels. The analysis shows that when human annotators are reliable ($\eta_h \le 0.2$, IAA $\ge 0.6$), increasing the number of categories requires more instances for the LLM to pass the test. In contrast, when annotators are less reliable ($\eta_h \ge 0.4$, IAA $\le 0.4$), increasing the number of categories makes it easier for the LLM to pass. This occurs because both the LLM and the excluded human annotator are more likely to predict a label that differs from the other two annotators, leading to ties in our procedure. This phenomenon is also reflected in the decreasing accuracy of the LLM with the majority vote as the number of categories increases. To compensate for this effect, we recommend using smaller values of $\varepsilon$, which is always advisable whenever human annotators are noisy, and reporting an additional traditional metric, such as accuracy with the majority vote, to complement the alt-test results.
Nevertheless, the behavior observed in the scenario of noisy human annotators combined with many categories is expected. In such a scenario, the resulting human annotations are of low quality. Since the goal of the alt-test is to assess whether the LLM is a comparable alternative to recruiting human annotators, it is appropriate that the LLM passes the test when it provides a more reliable option. 
\section{Advanced Topics}
\label{sec:advanced}

\subsection{Handling Imbalanced Labels}
\label{sub:imbalanced}

In many annotation tasks, there is an issue of label imbalance, where one class or category is disproportionately represented compared to others. For instance, in the SummEval dataset's "Consistency" aspect, the majority vote scores are distributed as follows: {\small$\{1: 0.02, 2: 0.07, 3: 0.02, 4: 0.00, 5: 0.89\}$}. 

This imbalance poses challenges for evaluation. Traditional metrics like accuracy tend to favor annotators who predominantly assign `5' as an annotator who always chooses `5' would achieve a high accuracy of 0.89. Conversely, correlation metrics may penalize such annotators, even when their labels have substantial overlap with others, as illustrated in the code below:
\begin{figure}[!h]
\centering
\begin{lstlisting}
from scipy.stats import pearsonr, spearmanr

l1 = [1, 2, 3, 4] + [5] * 100
l2 = [5] * 100 + [4, 3, 2, 1]
print(f'Pearson: {pearsonr(l1, l2)[0]:.2f}')
print(f'Spearman: {spearmanr(l1, l2)[0]:.2f}')
\end{lstlisting}
\rule{\linewidth}{0.4pt}
\footnotesize\begin{verbatim}
Pearson: -0.03
Spearman: -0.04
\end{verbatim}
\end{figure}

Our procedure is not without flaws. For instance, an LLM that consistently predicts `5' would succeed and pass our test due to the high proportion of ties (at least 89\%). To address the issue of imbalanced labels, we propose a modification to our procedure described below.  

Let $Y = {y_1, y_2, \dots, y_l}$ represent the set of possible classes. We define $y_{i,j}$ as the ``gold'' label for instance $x_i$ when comparing the LLM with annotator $h_j$. The ``gold'' label is given by $y_{i,j} = MV_j(x_i)$, where $MV_j(x_i)$ is the majority vote label for $x_i$ based on all annotators except $h_j$ (ensuring the excluded annotator does not influence the gold label). In the case of a single expert annotator (see \S\ref{sub:single_expert}), the gold label is defined as $y_{i,j} = h_{\text{exp}}(x_i)$. For simplicity, we use $y_i$ instead of $y_{i,j}$ in the notation.

The idea is to weigh each instance annotated by $h_j$ with the inverse probability of its $MV$ label (this correction is known as inverse probability weighting, IPW). The inverse probability of class $y$, denoted by $\pi_{y, j}$, is defined as:
\[
\pi_{y, j} = \frac{|\sI_j|}{\sum_{i \in \sI_j}{\1\{MV_j(x_i)=y\}}}
\]
where $\sI_j$ is the set of instances annotated by $h_j$, and $\1\{MV_j(x_i)=y\}$ is an indicator function that gets one if the majority vote label of $x_i$ is class $y$, and zero otherwise. The difference between the indicators $W_{i,j}^f$ and $W_{i, j}^h$ is weighted to $d_{i, j}^{\pi} = \pi_{y, j}(W_{i, j}^h - W_{i, j}^f)$.

The formula of the weighted and balanced advantage probability, $\rho_{j, \pi}^{f}$, is:
\[
\rho_{j}^{f, \pi} = \frac{\sum_{i \in \sI_j}{\pi_{{y_i}, j}W_{i, j}}}{\sum_{i \in \sI_j}{\pi_{y_i, j}}}
\]

This formulation ensures that the overrepresentation of certain classes is mitigated, allowing each class to contribute equally to $\rho_{j}^{f, \pi}$. Similarly, we define $\rho_{j}^{h, \pi}$ and the difference random variable is given by  $\bar{d}_{j}^{\pi}=\rho_{j}^{h, \pi}-\rho_{j}^{f, \pi}$.

Since the new random variables are weighted means, their variance is different, and the corresponding test statistics should be adjusted:
\begin{align*}
    t_j^\pi = \frac{\bar{d}_j^{\pi}-\varepsilon}{s_{j}^{\pi} / \sqrt{n^{\pi}}}
\end{align*}
Where $s_j^{\pi}$ and the effective sample size $n^{\pi}$ are:
\begin{align*}
    s_{j}^{\pi} &=  \sqrt{\frac{\sum_{i=1}^n {\pi_{y_i, j}\left(d_{i,j} - \bar{d}_{j}\right)^2}}{\sum_{i \in \sI_j}{\pi_{y_i, j}}}}\\
    n^{\pi} &= \frac{(\sum_{i \in \sI_j}{\pi_{y_i, j}})^2}{\sum_{i \in \sI_j} \pi_{y_i,j}^2}
\end{align*}

The rest of the procedure for computing the winning rate $\omega$ and applying the FDR correction remains unchanged.

\subsection{A Single Expert Annotator}
\label{sub:single_expert}

In many cases, researchers wish to annotate their dataset using experts, however, expert annotations are expensive, hence most often we have only one expert to compare to. To address this scenario, we propose a simple adjustment to our procedure, and ask whether the LLM aligns more closely to \textbf{a single expert} than \textbf{a non-expert human annotator} does. This scenario represents a practical case where an expert has annotated a subset of examples, but more annotations are required. To continue, the researcher must decide: Should the remaining annotations be completed by the LLM or by recruiting a non-expert annotator? The adjustment is applied only to the formula for the alignment score:
\begin{align*}
    -\RMSE(f, x_i, \text{exp}) &= -|f(x_i)-h_{\text{exp}}(x_i))| \\
    \ACC(f, x_i, \text{exp}) &= \1\{f(x_i)=h_{\text{exp}}(x_i)\} \\
    \SIM(f, x_i, \text{exp}) &= \texttt{sim}(f(x_i), h_{\text{exp}}(x_i))
\end{align*}
Note that this time, we compare $S(f, x_i, \text{exp})$ against $\{S(h_j, x_i, \text{exp})\}_{j=1}^{m}$, where $\{h_j\}_{j=1}^{m}$ represent non experts. The methods for aggregating the scores across the entire datasets to calculate $\rho_j$ and the winning rate $\omega$ remain unchanged.

\subsection{Incorporating Annotator Quality}
\label{sub:quality}

A key principle of our procedure is valuing the perspectives of all annotators, and until this subsection, each perspective has been treated equally. However, this can sometimes be a limitation, as not all annotators have the same level of expertise. For instance, the input of a more experienced or highly trained crowd-worker should carry more weight than that of a novice. In medical annotations, such as analyzing lesion images, the opinion of an experienced dermatologist would naturally be more reliable and respected than that of an intern.

In this subsection, we propose a modification to our procedure that incorporates a quality score assigned to each human annotator. The quality score can be derived from various sources, such as performance on a qualification test performed by the crowd-workers or a subjective assessment by the paper authors based on their judgment. Weighting annotations based on an annotator's quality score is a well-established practice in the NLP community \citep{InelKCDRPRAS14, uma2021learning, plank2022problem}.

Let $Q_j$ represent the quality score of annotator $h_j$. This score is incorporated at two points in our procedure. The first is in the formula for the alignment score metric, $S(f, x_i, j)$, where we assign greater weight to high-quality annotators. The modification is defined as follows:
\resizebox{0.975\columnwidth}{!}{%
\begin{minipage}{\columnwidth}
\begin{align*}
    -\RMSE(f, x_i, j) &= -\sqrt{\frac{\sum_{k \in \sH_i[-j]}{Q_k(f(x_i) - h_k(x_i))^2}}{\sum_{k \in \sH_i[-j]}{Q_k}}} \\[10pt]
    \ACC(f, x_i, j) &= \frac{\sum_{k \in \sH_i[-j]}{Q_k\1\{f(x_i) = h_k(x_i)\}}}{\sum_{k \in \sH_i[-j]}{Q_k}} \\[10pt]
    \SIM(f, x_i, j) &= \frac{\sum_{k \in \sH_i[-j]}{Q_k\texttt{sim}(f(x_i), h_k(x_i))}}{\sum_{k \in \sH_i[-j]}{Q_k}}
\end{align*}
\end{minipage}}
\vspace{1em}

The second point where quality scores can be incorporated is in the winning rate formula. Specifically, if the LLM outperforms a high-quality annotator, this should contribute more significantly to the winning rate. The modification is as follows:
\resizebox{\columnwidth}{!}{%
\begin{minipage}{\columnwidth}
\begin{align*}
    \omega = \frac{\sum_{j=1}^{m}{Q_j\1\{H_{0j} \text{ is rejected}\}}}{\sum_{j=1}^{m}{Q_j}}
\end{align*}
\end{minipage}}

\subsection{Subjective Annotation Tasks}
\label{sub:subjective}

Subjective annotation tasks, such as those involving hate speech or offensive language, often lack a single ground truth and may reflect diverse perspectives, especially from marginalized or underrepresented groups. Accordingly, minority opinions should be considered when determining labels and assessing annotation quality in subjective tasks. Next, we will specify three options that can help address this issue.

\paragraph{Label imbalance (Appendix \ref{sub:imbalanced}):} While subjective tasks may not traditionally fall under label imbalance, our proposed solution involves penalizing instances based on their ``gold label'' (i.e., majority vote), such that majority-class instances contribute less to the test. A similar approach can be adapted for subjective tasks, for example, giving more weight to instances where a single annotator flags a problematic statement, even if it is not the majority view.

\paragraph{Annotator quality (Appendix \ref{sub:quality}):} We discuss incorporating annotator quality scores, such as in cases where one annotator is an expert and another is less experienced. This approach is also applicable to subjective tasks, for instance, by assigning higher quality scores to more sensitive annotators or those from minority demographics.

\paragraph{Customize the alignment scoring function ($S(f, x_i, j)$):} The alignment scoring function (e.g., accuracy for classification) can be customized to fit the researcher’s needs. For example, one might use a variant of accuracy suitable for hate speech, e.g., giving more weight to specific hate speech labels. The rest of the procedure remains unchanged, making our method highly flexible and easily adaptable.

\subsection{Testing if LLMs Outperform Humans}
\label{sub:gold_label}

Many studies do not aim to use LLMs for annotations or judgments but instead evaluate whether LLMs outperform humans. For instance, \citet{schubert2023performance} assessed LLM performance on neurology board–style examinations, where LLMs answered 85.0\% of questions correctly, surpassing the mean human score of 73.8\%. Similarly, \citet{luo2024large} compared LLMs to human experts in predicting neuroscience experiment outcomes, finding that LLMs achieved an average accuracy of 81.4\%, outperforming human experts, who averaged 63.4\%. In these cases, gold labels (test answers or experiment outcomes) are available and used to benchmark LLMs against humans.

While comparing the performance of LLMs to humans and conducting hypothesis tests to determine the significance of performance differences is a well-established approach \citep{dror2018hitchhiker}, our procedure can also be applied in these scenarios. To apply the alt-test, the modification follows the approach outlined in the previous subsection \S\ref{sub:single_expert}. Simply replace the single expert annotation, $h_{\text{exp}}(x_i)$ with the gold label $y_{\text{gold}}$ in the formula for the alignment score. Moreover, researchers should set $\varepsilon = 0.0$ in this case, as the goal is to determine whether the LLM outperforms humans, rather than testing if it holds an advantage in annotation tasks while considering the cost-benefit penalty.


The advantage of the alt-test is that it quantifies the number of humans the LLM statistically outperforms. For example, consider a scenario where the LLM achieves a score of 70 on an exam, while three humans score 80, 80, and 20. A simple comparison of the mean would suggest that the LLM outperforms humans. However, $\omega$ offers a more realistic assessment by setting the LLM's winning rate to 0.33. Furthermore, the alt-test addresses a potential limitation of mean comparisons, where the human mean may disproportionately reflect individuals who contributed more annotations.

\subsection{The Benjamini-Yekutiali Procedure}
\label{sec:app_by}

The Benjamini-Yekutieli (BY) procedure (presented in Algorithm~\ref{alg:by}) is a statistical procedure designed to control the false discovery rate (FDR) in multiple hypothesis testing. It is particularly suited for scenarios where the test statistics of the different null hypotheses are dependent. Unlike the simpler Benjamini-Hochberg procedure, the BY method introduces a correction factor, $c_m=\sum_{j=1}^m \frac{1}{j}$, which accounts for dependency among hypotheses. This ensures that the overall FDR remains at the desired level $q$. The procedure identifies the largest set of hypotheses whose p-values are below adjusted thresholds, rejecting these null hypotheses while controlling the FDR. The BY procedure is widely used in fields like genomics and machine learning, where testing dependencies are common.

\begin{algorithm}
\caption{Benjamini-Yekutieli (BY) Procedure}
\begin{algorithmic}[1]
\Require p-values from $m$ hypothesis tests, desired FDR level $q$ (e.g., 0.05)
\State Sort the p-values in ascending order: \qquad $p_{(1)} \leq p_{(2)} \leq \ldots \leq p_{(m)}$
\For{$i = 1$ to $m$}
    \State Compute the adjusted threshold using:
    \[
    \text{threshold}(i) = \frac{i}{m} \times \left( \frac{q}{\sum_{j=1}^m \frac{1}{j}} \right)
    \]
\EndFor
\State Find the largest $i$ such that $p_{(i)} \leq \text{threshold}(i)$
\State Reject null hypotheses corresponding to $p_{(1)}, p_{(2)}, \ldots, p_{(i)}$\\
\Return List of rejected null hypotheses
\end{algorithmic}
\label{alg:by}
\end{algorithm}

\section{The Optimal LLM-as-a-Judge}
\label{sub:theorem}

In this subsection, we introduce a theorem that defines the optimal LLM-as-a-judge. The theorem identifies the function that maximizes alignment with the collective distribution, achieving an advantage probability of $\rho=1$. 

The optimal LLM-as-a-judge naturally depends on the choice of the scoring function, $S(f, x_i, j)$. For instance, if $\ACC$ (accuracy) is used as the metric, the optimal LLM-as-a-judge is the one that predicts the majority vote for each instance. Conversely, if $\RMSE$ (root mean squared error) is used, the optimal LLM-as-a-judge is the one that predicts the mean of the annotations. This is formalized in the theorem:

\setcounter{theorem}{0}
\begin{theorem}[Optimal LLM-as-a-Judge]
\label{theorem:ideal}
For a given dataset, let $S(f, x_i, j)$ be the alignment scoring function. The optimal LLM-as-a-judge, denoted as $f^*(x_i)$, is defined as follows:
\begin{itemize}
    \item If $S=\ACC$, 
    then $f^*(x_i)=MV(x_i)$, predicting the majority vote of the annotators for $x_i$.
    \item If $S = -\RMSE$, 
    then $f^*(x_i)=\frac{\sum_{k\in\sH_i}{h_k(x_i)}}{|\sH_i|}$, predicting the mean annotation for $x_i$.
\end{itemize}
In both cases, the optimal LLM-as-a-judge achieves an advantage probability of $\rho=1$.
\end{theorem}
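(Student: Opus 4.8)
The plan is to reduce the claim $\rho=1$ to a single pointwise inequality and then verify that inequality separately for each scoring function. Since $\rho=\frac{1}{m}\sum_{j=1}^{m}\rho_j^f$ and each $\rho_j^f=\hat{\E}[W_{i,j}^f]$ is an average of $\{0,1\}$-indicators, we have $\rho\le 1$ unconditionally; hence any $f$ with $\rho=1$ is automatically optimal, and it suffices to exhibit such an $f^*$. Moreover $\rho=1$ holds if and only if $W_{i,j}^{f^*}=1$ for every instance $x_i$ and every excluded annotator $j\in\sH_i$, i.e.\ $S(f^*,x_i,j)\ge S(h_j,x_i,j)$ pointwise. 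So I would fix an arbitrary pair $(i,j)$ and prove this inequality; averaging over all $(i,j)$ then yields the theorem.

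\textbf{The accuracy case.} Here the inequality becomes a counting statement: writing $M=MV(x_i)$ and $v_j=h_j(x_i)$, I must show
\[
\#\{k\in\sH_i[-j]:h_k(x_i)=M\}\;\ge\;\#\{k\in\sH_i[-j]:h_k(x_i)=v_j\}.
\]
Let $n_v$ denote the number of annotators in $\sH_i$ assigning label $v$, so $M=\argmax_v n_v$. I would split into two cases. If $v_j=M$, both sides equal $n_M-1$ and equality holds. If $v_j\ne M$, the left side equals $n_M$ (annotator $j$ did not vote $M$) while the right side equals $n_{v_j}-1$; since $M$ is a maximizer, $n_M\ge n_{v_j}>n_{v_j}-1$. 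Either way the inequality holds, and the argument is insensitive to how majority-vote ties are broken.

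\textbf{The RMSE case.} Negating and squaring (the square root is monotone and the common factor $\tfrac{1}{|\sH_i|-1}$ is positive), the desired inequality is equivalent to
\[
\sum_{k\in\sH_i[-j]}(f^*(x_i)-h_k(x_i))^2\;\le\;\sum_{k\in\sH_i[-j]}(h_j(x_i)-h_k(x_i))^2,
\]
where $f^*(x_i)=\bar{h}$ is the mean over the \emph{full} set $\sH_i$. The key observation is that $\sum_{k\in\sH_i[-j]}(c-h_k)^2$, as a function of a constant $c$, decomposes (bias--variance style, the cross term vanishing) as $(|\sH_i|-1)(c-\bar{h}_{[-j]})^2+\text{const}$, where $\bar{h}_{[-j]}$ is the leave-one-out mean over $\sH_i[-j]$. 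Thus the comparison reduces to the one-dimensional inequality $|\bar{h}-\bar{h}_{[-j]}|\le|h_j-\bar{h}_{[-j]}|$. I would finish with the algebraic identity $\bar{h}-\bar{h}_{[-j]}=\frac{1}{|\sH_i|}\,(h_j-\bar{h}_{[-j]})$, obtained by expressing both means through the total $\sum_{k\in\sH_i}h_k$ and $h_j$; the contraction factor $\tfrac{1}{|\sH_i|}\le 1$ then gives the inequality immediately.

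The main obstacle I anticipate is the mismatch between how $f^*$ is defined and how it is scored: $f^*$ aggregates over all of $\sH_i$, but the alignment score is evaluated against the leave-one-out set $\sH_i[-j]$. Consequently the textbook facts ``the mean minimizes squared error'' and ``the mode maximizes matches'' do not apply verbatim, since their natural minimizer/maximizer would be taken over the subset rather than the full set. The crux in both cases is showing that the full-set aggregate is nevertheless at least as well-aligned with the subset as the excluded annotator $h_j$ — handled by the case split in the accuracy setting and, more delicately, by the contraction identity $\bar{h}-\bar{h}_{[-j]}=\frac{1}{|\sH_i|}(h_j-\bar{h}_{[-j]})$ in the RMSE setting.
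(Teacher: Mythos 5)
Your proposal is correct, and it splits naturally into two parts relative to the paper. The reduction of $\rho=1$ to the pointwise inequality $S(f^*,x_i,j)\ge S(h_j,x_i,j)$ and your treatment of the $\ACC$ case are essentially identical to the paper's proof: the paper performs the same counting argument (if $h_j(x_i)=MV(x_i)$ the two sides tie; otherwise excluding $h_j$ leaves the $MV$-label count intact while the count of $h_j$'s own label drops by one), with the same remark about tie-breaking. Where you genuinely diverge is the $-\RMSE$ case. The paper's route is to add the $j$-th term back in: since $(h_j(x_i)-h_j(x_i))^2=0$, the leave-one-out sum for $h_j$ equals the full-set sum; it then invokes the fact that the full-set mean $\bar h(x_i)$ minimizes $\sum_{k\in\sH_i}(c-h_k(x_i))^2$ over $c$, and drops the strictly positive term $(\bar h(x_i)-h_j(x_i))^2$ to return to the leave-one-out sum for $f^*$ --- which forces a separate case when $h_j(x_i)=\bar h(x_i)$. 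You instead decompose the leave-one-out sum around the leave-one-out mean $\bar h_{[-j]}$ and close with the exact contraction identity $\bar h - \bar h_{[-j]} = \frac{1}{|\sH_i|}\bigl(h_j - \bar h_{[-j]}\bigr)$, which I verified is correct. Your version buys two things: it needs no case split (equality is absorbed by the weak inequality, which is all that $W_{i,j}^f$ requires), and it is quantitative --- it shows the full-set mean sits closer to $\bar h_{[-j]}$ than $h_j$ does by exactly a factor $1/|\sH_i|$, so the relevant squared deviation contracts by $1/|\sH_i|^2$, rather than merely establishing the sign of the difference. The paper's version is shorter and leans only on the textbook mean-minimization fact. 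Both arguments implicitly assume $|\sH_i|\ge 2$ so the leave-one-out score is defined; neither this nor the tie-breaking issue causes trouble in either proof.
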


\begin{proof}
Let $h_j$ be the excluded annotator.

\paragraph{Case 1 $S = \ACC$:} 
Let $MV(x_i)$ denote the majority vote for instance $x_i$, defined as the label that appears most frequently in the set $\{h_k(x_i)\}_{k \in \sH_i}$. In the event of a tie, where more than one label qualifies as the majority, $MV(x_i)$ is randomly sampled from the tied labels.
We now show that $f(x_i) = MV(x_i)$ is optimal. 

If $h_j(x_i) = MV(x_i)$, then $f(x_i) = h_j(x_i)$ and therefore $W_{i,j}^f = 1$.
Otherwise, if $h_j(x_i) \neq MV(x_i)$, then by the definition of $MV(x_i)$:
\begin{align*}
    \bigl|\{\,k \in \sH_i : h_k(x_i) = MV(x_i)\}\bigr| \;\ge\; \\
    \bigl|\{\,k \in \sH_i : h_k(x_i) = h_j(x_i)\}\bigr|
\end{align*}
Note that if there is a single majority label, the set on the left (top) is strictly larger than the set on the right (bottom). If there is no single majority label, it may be a tie in which $h_j(x_i)$ appears with the same frequency as the (randomly sampled) $MV(x_i)$.

Once we exclude $h_j$ from both sets, the size of the left set remains unchanged (since $MV(x_i) \neq h_j(x_i)$, $h_j$ was never in the left set). However, the right set loses one element (specifically $h_j$). Hence, $\ACC(f, x_i, j) > \ACC(h_j, x_i, j)$ which implies $W_{i,j}^f = 1$.

\paragraph{Case 2 $S = -\RMSE$:}

Let 
\[
\bar{h}(x_i) \;=\; \frac{\sum_{k\in\sH_i} h_k(x_i)}{\lvert \sH_i \rvert}
\]
be the mean value of the annotations for instance $x_i$. We now show that $f(x_i) = \bar{h}(x_i)$ is optimal.

If $h_j(x_i) = \bar{h}(x_i)$, then $f(x_i) = h_j(x_i)$, implying $W_{i,j}^f = 1$. 
Otherwise, $h_j(x_i) \neq \bar{h}(x_i)$. 

To show that $\RMSE(f, x_i, j) < \RMSE(h_j, x_i, j)$ (which implies $W_{i,j}^f = 1$), we need to prove:
\begin{align*}
\sum_{k \in \sH_i[-j]} (\bar{h}(x_i) - h_k(x_i))^2 &\;<\; \\
\sum_{k \in \sH_i[-j]} (h_j(x_i) - h_k(x_i))^2
\end{align*}

First, we recall that the arithmetic mean uniquely minimizes the sum of squared errors over a set of real numbers. Formally, for any $c$:
\begin{align*}
\sum_{k \in \sH_i} (\bar{h}(x_i) - h_k(x_i))^2 &\;<\; \\
\sum_{k \in \sH_i} (c - h_k(x_i))^2
\end{align*}
By setting $c = h_j(x_i)$, it follows:
\begin{align*}
\sum_{k \in \sH_i} (\bar{h}(x_i) - h_k(x_i))^2 &\;<\; \\
\sum_{k \in \sH_i} \bigl(h_j(x_i) - h_k(x_i)\bigr)^2
\end{align*}

Second, note that
\begin{align*}
\sum_{k \in \sH_i[-j]} \bigl(\bar{h}(x_i) - h_k(x_i)\bigr)^2 &\;<\; \\
\sum_{k \in \sH_i} \bigl(\bar{h}(x_i) - h_k(x_i)\bigr)^2 &\;<\; \\
\sum_{k \in \sH_i} \bigl(h_j(x_i) - h_k(x_i)\bigr)^2 &\;=\; \\
\sum_{k \in \sH_i[-j]} \bigl(h_j(x_i) - h_k(x_i)\bigr)^2
\end{align*}
The first inequality holds because
\[
\bigl(\bar{h}(x_i) - h_j(x_i)\bigr)^2 > 0
\]
given $h_j(x_i) \neq \bar{h}(x_i)$. The second follows from the minimization property of the mean. The final equality is trivial since
\[
\bigl(h_j(x_i) - h_j(x_i)\bigr)^2 = 0
\]
Therefore, $W_{i,j}^f = 1$. 

\paragraph{Conclusion:} We have demonstrated that in both cases, setting $f^*(x_i)$ as defined ensures $W_{i,j}^f = 1$ for any instance $x_i$. Consequently, $\rho_j^f = 1$. Furthermore, since this holds for any excluded annotator $j$, it follows that $\rho = 1$. 

\end{proof}




\section{Datasets}
\label{app:dataset_overview}

\begin{itemize}
    \item \textbf{WAX} \cite{liu2022wax} -- Prompt provided in Box~\ref{box:wax}. 
    We use the Relation Labeling task from the Word Association eXplanations (WAX) dataset. In this task, MTurk annotators were presented with two words—a cue word and an associated word (e.g., \textit{shark} and \textit{sharp}), along with an explanation (e.g., ``shark teeth are sharp''). The annotators labeled the relation between the two associated words based on the given explanation, selecting from 16 predefined relation types. We included only items that were annotated by at least five crowd workers.
    \item \textbf{SummEval} \cite{fabbri2021summeval} -- Prompt provided in Box~\ref{box:summeval}.
    This dataset includes human evaluations of summaries generated by 16 neural summarization models applied to 100 documents from the CNN/DailyMail test set. We focused on expert annotations (authors of summarization papers) collected for four dimensions: coherence, consistency, fluency, and relevance. The annotators rated summaries on a Likert scale from 1 to 5, with higher scores indicating better quality. 
    \item \textbf{LGBTeen} \cite{lissak-etal-2024-colorful} -- Prompt provided in Box~\ref{box:lgbteen}. 
    Three expert annotators evaluated responses from humans and various LLMs to queries from queer youth, extracted from the r/LGBTeen subreddit. Each response was assessed using a ten-question questionnaire designed to evaluate desirable traits, such as inclusiveness, sensitivity, and openness (see Box~\ref{box:lgbteen_q}). Responses were categorized as `Yes,’ `Partially,’ `No,’ or `Irrelevant’. We kept only responses that were annotated by at least two annotators. 
    \item \textbf{MT-Bench} \cite{mtbench} -- Prompt provided in Box~\ref{box:mtbench}.
    MT-Bench is a dataset consisting of 80 manually crafted multi-turn questions designed to evaluate the conversational and instruction-following abilities of LLMs. The dataset covers eight categories of prompts, such as writing, reasoning, math, and coding. Expert annotators, including the paper's authors and graduate students with expertise in the relevant categories, evaluated responses from LLMs by assessing 20 multi-turn questions conversation. For each question, annotators selected the better response between two competing LLM responses or marked it as a tie. We included only items annotated by at least two annotators and annotators who evaluated more than 30 items.
    \item \textbf{Lesion} \cite{Cheplygina2018} -- Prompt provided in Box~\ref{box:lesion}.
    This dataset includes images of skin lesions from the ISIC 2017 challenge \citep{CodellaGCHMDKLM18} that undergraduate students annotated during a project on medical image analysis. Each image was annotated with five features: asymmetry (scale 0-2), irregularity of the border (0-2), number of colors present (1-6), presence of structures such as dots (0-2) and presence of a blueish glow (0-2).
    \item \textbf{Framing} \cite{frermann-etal-2023-conflicts} -- Prompt provided in Box~\ref{box:framing}.
    This dataset consists of articles on climate change annotated with 22 yes/no questions about narrative framing. The questions are grouped into five framing categories: resolution, conflict, human interest, moral, and economic. The 22 questions and annotation guidelines are presented in Boxes~\ref{box:framing_q} and \ref{box:framing_g}. The annotations were performed by four on-site annotators with backgrounds in social and political sciences, who underwent an extensive training phase. We included only article-question pairs that were annotated by at least three annotators.
    \item \textbf{CEBaB} \cite{abraham2022cebab} -- Prompt provided in Box~\ref{box:cebab}.
    This large-scale dataset comprises restaurant reviews annotated by crowd workers. The workers labeled the sentiment of four aspects: Food, Service, Noise, and Ambiance. Each aspect was categorized as `Positive', `Negative' or `Unknown'. Additionally, star ratings were provided on a five-point scale. We use two variants of this dataset: \textit{CEBaB-A}, which includes annotations for the four aspects, and \textit{CEBaB-S}, which includes the star ratings. For each variant, we retained only items annotated by at least three annotators. We identified a subset of ten annotators with the highest overlap of annotated items (i.e., items annotated by the largest number of these ten annotators).
    \item \textbf{10K Prompts}\footnote{\url{https://huggingface.co/datasets/data-is-better-together/10k_prompts_ranked}} -- Prompt provided in Box~\ref{box:10k_prompts}.
    This dataset is part of a project by Argilla and HuggingFace and was created by collecting prompts from various sources. The annotators are members of the HuggingFace community tasked with ranking the quality of synthetic and human-generated prompts on a Likert scale from 1 to 5. We identified a set of 13 annotators, each with at least 30 items also annotated by another annotator.
    \item \textbf{KiloGram} \cite{ji-etal-2022-abstract} -- Prompt provided in Box~\ref{box:kilogram}. 
    This dataset includes thousands of tangram images (see an example in Figure~\ref{fig:tangram}), annotated by MTurk workers. Each annotator provided a short free-text description of what the tangram shape looks like. For computing similarity between annotations, we use cosine similarity applied to representations extracted by a SentenceTransformer model. Note that we tested various SentenceTransformer models based on the HuggingFace STS English leaderboard\footnote{\url{https://huggingface.co/spaces/mteb/leaderboard}}, and the results presented in Table~\ref{tab:results_kilogram}. We decided to report the results using `e5-large-v2'.\footnote{\url{https://huggingface.co/intfloat/e5-large-v2}}
\begin{figure}[!h]
    \centering
    \includegraphics[width=0.45\textwidth]{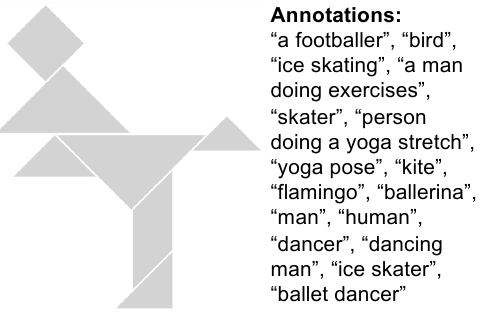}
    \caption{Example of a tangram from the KiloGram dataset with corresponding free-text human annotations.}
    \label{fig:tangram}
\vspace{1.2em}
\end{figure}
\begin{table}[!h]
\centering
\large
\begin{adjustbox}{width=0.48\textwidth}
\begin{tabular}{l|ccc|ccc}
\toprule
& \multicolumn{3}{c|}{\textbf{all-MiniLM-L6-v2}} & \multicolumn{3}{c}{\textbf{e5-large-v2}} \\
\midrule
& \underline{Sim} & \underline{WR $\omega$} & \underline{WP $\rho$} & \underline{Sim} & \underline{WR $\omega$} & \underline{WP $\rho$} \\
Humans & 0.28 & -- & -- & 0.78 & -- & -- \\
Gemini-Flash & 0.28 & 0.42 & \textbf{0.56} & 0.79 & \cellcolor{green!30}0.66 & \textbf{0.61} \\
Gemini-Pro & 0.26 & 0.14 & 0.49 & 0.77 & 0.08 & 0.43 \\
GPT-4o & 0.27 & 0.3 & 0.50 & 0.78 & 0.2 & 0.53 \\
GPT-4o-mini & 0.25 & 0.14 & 0.46 & 0.78 & 0.16 & 0.49 \\
\midrule
& \multicolumn{3}{c|}{\textbf{UAE-Large-V1}} & \multicolumn{3}{c}{\textbf{GIST-Embedding-v0}} \\
\midrule
& \underline{Sim} & \underline{WR $\omega$} & \underline{WP $\rho$} & \underline{Sim} & \underline{WR $\omega$} & \underline{WP $\rho$} \\
Humans & 0.51 & -- & -- & 0.65 & -- & -- \\
Gemini-Flash & 0.51 & 0.32 & \textbf{0.53} & 0.66 & \cellcolor{green!30}0.62 & \textbf{0.57} \\
Gemini-Pro & 0.50 & 0.16 & 0.48 & 0.64 & 0.0 & 0.42 \\
GPT-4o & 0.49 & 0.12 & 0.43 & 0.65 & 0.32 & 0.53 \\
GPT-4o-mini & 0.48 & 0.04 & 0.41 & 0.65 & 0.32 & 0.52 \\
\bottomrule
\end{tabular}
\end{adjustbox}
\caption{\textbf{Kilogram -- Different Embeddings Models:} Sim is the average cosine similarity between the embeddings. $\omega$ is calculated with $\varepsilon=0.1$. Bold values indicate the best-performing LLM according to $\rho$ and a green background highlights a $\omega$ higher than 0.5.}
\label{tab:results_kilogram}
\vspace{1.8em}
\end{table}

\end{itemize}

\onecolumn

\section{Additional Results}
\label{app:additional_results}

\begin{table*}[!h]
\centering
\large
\begin{adjustbox}{width=0.98\textwidth}
\begin{tabular}{l|ccc|ccc|ccc|ccc|ccc}
\toprule
\multicolumn{16}{c}{\textbf{SummEval --- $m=3, n=1600, \varepsilon=0.2$}} \\
\midrule
& \multicolumn{3}{c|}{\textbf{Coherence}} & \multicolumn{3}{c|}{\textbf{Consistency}} & \multicolumn{3}{c|}{\textbf{Fluency}} & \multicolumn{3}{c|}{\textbf{Relevance}} & \multicolumn{3}{c}{} \\
\midrule
& \underline{Pears} & \underline{WR $\omega$} & \underline{AP $\rho$} & \underline{Pears} & \underline{WR $\omega$} & \underline{AP $\rho$} & \underline{Pears} & \underline{WR $\omega$} & \underline{AP $\rho$} & \underline{Pears} & \underline{WR $\omega$} & \underline{AP $\rho$} & & & \\
Gemini-Flash &  0.38 & \cellcolor{green!30}0.67 &          0.64 &                       0.54 &                     0.0 &                    0.51 &                   0.31 &                  0.0 &                0.16 &                     0.34 &                      0.0 &                  0.54 &&& \\
  Gemini-Pro &  0.40 & \cellcolor{green!30}0.67 &          0.66 &                       0.59 &                     0.0 &                    0.32 &                   0.19 &                  0.0 &                0.15 &                     0.34 & \cellcolor{green!30}0.67 &                  0.63 &&& \\
      GPT-4o &  0.47 &  \cellcolor{green!30}1.0 & \textbf{0.75} &                       0.62 &                     0.0 &                    0.44 &                   0.43 &                  0.0 &                0.21 &                     0.37 &                      0.0 &                   0.50 &&& \\
 GPT-4o-mini &  0.42 &  \cellcolor{green!30}1.0 & \textbf{0.75} &                       0.53 &                     0.0 &                    0.46 &                   0.36 &                  0.0 &                0.21 &                     0.42 &  \cellcolor{green!30}1.0 &         \textbf{0.76} &&& \\
   Llama-3.1 &  0.36 &  \cellcolor{green!30}1.0 &           0.70 &                       0.52 &                     0.0 &                    0.68 &                   0.26 &                  0.0 &                 0.2 &                     0.38 &  \cellcolor{green!30}1.0 &                  0.74 &&& \\
  Mistral-v3 &  0.17 &                     0.33 &          0.58 &                       0.10 & \cellcolor{green!30}1.0 &           \textbf{0.87} &                   0.16 &                  0.0 &       \textbf{0.48} &                     0.16 &                     0.33 &                  0.56 &&& \\
\bottomrule
\toprule
\multicolumn{16}{c}{\textbf{Lesion --- $m=6, n=100, \varepsilon=0.15$}} \\
\midrule
& \multicolumn{3}{c|}{\textbf{Asymmetry}} & \multicolumn{3}{c|}{\textbf{Blue}} & \multicolumn{3}{c|}{\textbf{Border}} & \multicolumn{3}{c|}{\textbf{Color}} & \multicolumn{3}{c}{\textbf{Dermo}} \\
\midrule
& \underline{Pears} & \underline{WR $\omega$} & \underline{AP $\rho$} & \underline{Pears} & \underline{WR $\omega$} & \underline{AP $\rho$} & \underline{Pears} & \underline{WR $\omega$} & \underline{AP $\rho$} & \underline{Pears} & \underline{WR $\omega$} & \underline{AP $\rho$} & \underline{Pears} & \underline{WR $\omega$} & \underline{AP $\rho$} \\
Gemini-Flash &  0.36 & 0.00 &          0.52 &              0.55 & \cellcolor{green!30}1.0 &           0.91 &                0.15 &               0.0 &             0.61 &               0.63 &  \cellcolor{green!30}1.0 &            0.89 &               0.27 &                     0.0 &            0.63 \\
  Gemini-Pro &  0.32 & 0.17 & \textbf{0.74} &              0.58 & \cellcolor{green!30}1.0 &  \textbf{0.95} &                0.17 &               0.0 &    \textbf{0.72} &               0.56 &  \cellcolor{green!30}1.0 &   \textbf{0.85} &               0.19 & \cellcolor{green!30}0.5 &   \textbf{0.78} \\
      GPT-4o &  0.39 & 0.00 &          0.57 &              0.64 & \cellcolor{green!30}1.0 &           0.91 &               -0.02 &               0.0 &             0.21 &               0.59 & \cellcolor{green!30}0.83 &            0.81 &               0.24 &                     0.0 &            0.59 \\
 GPT-4o-mini &  0.15 & 0.17 &          0.65 &              0.49 & \cellcolor{green!30}1.0 &           0.93 &                0.01 &               0.0 &             0.57 &               0.60 & \cellcolor{green!30}0.67 &            0.75 &               0.32 & \cellcolor{green!30}0.5 &            0.77 \\
\bottomrule
\toprule
\multicolumn{16}{c}{\textbf{LGBTeen --- $m=4, n=88, \varepsilon=0.2$}} \\
\midrule
& \multicolumn{3}{c|}{\textbf{Q1 Inclusiveness}} & \multicolumn{3}{c|}{\textbf{Q2 Sensitivity}} & \multicolumn{3}{c|}{\textbf{Q3 Validation}} & \multicolumn{3}{c|}{\textbf{Q4 Mental}} & \multicolumn{3}{c}{\textbf{Q5 Personal}} \\
\midrule
& \underline{Acc} & \underline{WR $\omega$} & \underline{AP $\rho$} & \underline{Acc} & \underline{WR $\omega$} & \underline{AP $\rho$} & \underline{Acc} & \underline{WR $\omega$} & \underline{AP $\rho$} & \underline{Acc} & \underline{WR $\omega$} & \underline{AP $\rho$} & \underline{Acc} & \underline{WR $\omega$} & \underline{AP $\rho$} \\
Gemini-Flash & 0.78 &                      0.0 &          0.79 &             0.81 & \cellcolor{green!30}0.75 &            0.90 &             0.66 &                     0.0 &           0.74 &             0.38 &            0.00 &           0.66 &             0.59 & \cellcolor{green!30}0.5 &           0.86 \\
  Gemini-Pro & 0.82 &                      0.0 &          0.84 &             0.61 &                     0.25 &           0.76 &             0.53 &                     0.0 &           0.59 &             0.48 &            0.25 &  \textbf{0.77} &             0.52 &                     0.0 &           0.78 \\
      GPT-4o & 0.83 &                      0.0 &          0.82 &             0.77 & \cellcolor{green!30}0.75 &            0.90 &             0.74 & \cellcolor{green!30}0.5 &  \textbf{0.82} &             0.51 &            0.00 &            0.70 &             0.48 &                    0.25 &           0.76 \\
 GPT-4o-mini & 0.80 &                      0.0 &           0.80 &             0.81 & \cellcolor{green!30}0.75 &  \textbf{0.93} &             0.67 &                    0.25 &           0.73 &             0.50 &            0.00 &           0.69 &             0.47 &                     0.0 &           0.75 \\
   Llama-3.1 & 0.88 & \cellcolor{green!30}0.75 & \textbf{0.87} &             0.81 & \cellcolor{green!30}0.75 &           0.89 &             0.70 &                     0.0 &           0.75 &             0.40 &            0.00 &            0.70 &             0.61 & \cellcolor{green!30}0.5 &  \textbf{0.82} \\
  Mistral-v3 & 0.84 &                      0.0 &          0.86 &             0.82 & \cellcolor{green!30}0.75 &            0.90 &             0.74 &                    0.25 &  \textbf{0.82} &             0.49 &            0.00 &           0.68 &             0.38 &                     0.0 &           0.72 \\
\midrule
& \multicolumn{3}{c|}{\textbf{Q6 Networks}} & \multicolumn{3}{c|}{\textbf{Q7 Resources}} & \multicolumn{3}{c|}{\textbf{Q8 Safety}} & \multicolumn{3}{c|}{\textbf{Q9 Authenticity}} & \multicolumn{3}{c}{\textbf{Q10 Completeness}} \\
\midrule
& \underline{Acc} & \underline{WR $\omega$} & \underline{AP $\rho$} & \underline{Acc} & \underline{WR $\omega$} & \underline{AP $\rho$} & \underline{Acc} & \underline{WR $\omega$} & \underline{AP $\rho$} & \underline{Acc} & \underline{WR $\omega$} & \underline{AP $\rho$} & \underline{Acc} & \underline{WR $\omega$} & \underline{AP $\rho$} \\
Gemini-Flash & 0.38 &                     0.0 &          0.67 &             0.58 &                     0.0 &  \textbf{0.69} &             0.34 &                      0.0 &           0.58 &             0.40 &                     0.0 &           0.64 &              0.48 &                     0.0 &            0.62 \\
  Gemini-Pro & 0.41 &                     0.0 &           0.70 &             0.49 &                     0.0 &           0.62 &             0.18 &                      0.0 &           0.47 &             0.33 &                     0.0 &           0.59 &              0.33 &                     0.0 &            0.53 \\
      GPT-4o & 0.57 & \cellcolor{green!30}0.5 & \textbf{0.78} &             0.58 &                     0.0 &           0.65 &             0.69 &                     0.25 &           0.87 &             0.64 &                    0.25 &  \textbf{0.77} &              0.39 &                     0.0 &            0.66 \\
 GPT-4o-mini & 0.48 &                     0.0 &          0.71 &             0.57 &                     0.0 &  \textbf{0.69} &             0.59 &  \cellcolor{green!30}0.5 &           0.86 &             0.59 &                     0.0 &           0.72 &              0.42 &                     0.0 &            0.69 \\
   Llama-3.1 & 0.48 &                     0.0 &          0.63 &             0.38 &                     0.0 &           0.57 &             0.51 &                      0.0 &           0.78 &             0.20 &                     0.0 &           0.49 &              0.53 &                     0.0 &            0.69 \\
  Mistral-v3 & 0.47 &                     0.0 &          0.69 &             0.22 &                     0.0 &           0.44 &             0.73 & \cellcolor{green!30}0.75 &  \textbf{0.89} &             0.66 &                    0.25 &           0.71 &              0.48 &                     0.0 &   \textbf{0.79} \\
\bottomrule
\end{tabular}
\end{adjustbox}
\caption{Results for different annotation aspects in SummEval, Lesion and LGBTeen datasets. $m$ and $n$ are the number of annotators and instances, respectively. 
Acc is the accuracy with the majority vote, and Pears is the average Pearson correlation. WR is the winning rate ($\omega$), and AP is the average advantage probability ($\rho$). Bold values indicate the best-performing LLM according to $\rho$, and a green background highlights $\omega \ge 0.5$.}
\label{tab:results_partitions}
\vspace{-0.8em}
\end{table*}
\begin{table*}[!h]
\centering
\large
\begin{adjustbox}{width=.98\textwidth}
\begin{tabular}{l|ccccc|ccccc|ccccc|ccccc}
\toprule
& \multicolumn{5}{c|}{\textbf{Coherence}} & \multicolumn{5}{c|}{\textbf{Consistency}} & \multicolumn{5}{c|}{\textbf{Fluency}} & \multicolumn{5}{c}{\textbf{Relevance}} \\
\midrule
& \underline{1} & \underline{2} & \underline{3} & \underline{4} & \underline{5} & \underline{1} & \underline{2} & \underline{3} & \underline{4} & \underline{5} & \underline{1} & \underline{2} & \underline{3} & \underline{4} & \underline{5} & \underline{1} & \underline{2} & \underline{3} & \underline{4} & \underline{5} \\

Humans & .05 & \cellcolor[HTML]{FAFDCC} .14 & \cellcolor[HTML]{EAF7AF} .36 & \cellcolor[HTML]{F9FDC2} .20 & \cellcolor[HTML]{F7FCB9} .25 &           .02 &           \cellcolor[HTML]{FDFED9} .07 &           .02 &           .00 &           \cellcolor[HTML]{8DCF81} .89 &       .00 &       .02 &       \cellcolor[HTML]{FCFED7} .08 &       .02 &       \cellcolor[HTML]{90D083} .88 &         .02 &         .05 &         \cellcolor[HTML]{F5FBB8} .27 &         \cellcolor[HTML]{E0F3A8} .44 &         \cellcolor[HTML]{F8FCBE} .22 \\
\midrule

Llama-3.1 & .02 & \cellcolor[HTML]{F2FAB5} .29 & \cellcolor[HTML]{EEF9B3} .32 & \cellcolor[HTML]{F7FCBA} .24 & \cellcolor[HTML]{FBFDCF} .13 &           .02 &           .04 &           \cellcolor[HTML]{FCFED6} .09 &           \cellcolor[HTML]{F5FBB8} .27 &           \cellcolor[HTML]{CBEA9C} .58 &       \cellcolor[HTML]{FCFED4} .10 &       \cellcolor[HTML]{F1FAB5} .30 &       \cellcolor[HTML]{FAFDC8} .17 &       \cellcolor[HTML]{EDF8B1} .34 &       \cellcolor[HTML]{FCFED6} .09 &         .01 &         \cellcolor[HTML]{F9FDC5} .18 &         \cellcolor[HTML]{F9FDC2} .20 &         \cellcolor[HTML]{E3F4AA} .41 &         \cellcolor[HTML]{F9FDC2} .20 \\
Mistral-v3 & .00 & .00 & .01 & \cellcolor[HTML]{CCEA9D} .57 & \cellcolor[HTML]{E2F4AA} .42 &           .00 &           .00 &           .02 &           .01 &           \cellcolor[HTML]{7CC87B} .97 &       .00 &       .00 &       .04 &       \cellcolor[HTML]{C8E99B} .59 &       \cellcolor[HTML]{E9F6AF} .37 &         .00 &         .00 &         .01 &         .04 &         \cellcolor[HTML]{81CA7D} .95 \\

\midrule
Gemini-Flash & .04 & \cellcolor[HTML]{E6F5AC} .39 & \cellcolor[HTML]{D5EEA1} .52 & .05 & .00 &           .02 &           .03 &           \cellcolor[HTML]{F9FDC4} .19 &           \cellcolor[HTML]{E9F6AF} .37 &           \cellcolor[HTML]{E6F5AC} .39 &       .00 &       \cellcolor[HTML]{F9FDC5} .18 &       \cellcolor[HTML]{D2EDA0} .54 &       \cellcolor[HTML]{F5FBB8} .27 &       .01 &         .03 &         \cellcolor[HTML]{EAF7AF} .36 &         \cellcolor[HTML]{D3EDA0} .53 &         \cellcolor[HTML]{FCFED7} .08 &         .00 \\

\null\quad + 4-shots & .02 & \cellcolor[HTML]{FAFDC9} .16 & \cellcolor[HTML]{D3EDA0} .53 & \cellcolor[HTML]{F7FCB9} .25 & .04 &           .00 &           .03 &           \cellcolor[HTML]{FCFED7} .08 &           \cellcolor[HTML]{FCFED6} .09 &           \cellcolor[HTML]{A1D889} .80 &       .00 &       .01 &       \cellcolor[HTML]{FDFED9} .07 &       \cellcolor[HTML]{F7FCBA} .24 &       \cellcolor[HTML]{B9E294} .68 &         .02 &         \cellcolor[HTML]{FCFED4} .10 &         \cellcolor[HTML]{D3EDA0} .53 &         \cellcolor[HTML]{EFF9B3} .31 &         .04 \\
Gemini-Pro & .01 & \cellcolor[HTML]{DDF2A6} .46 & \cellcolor[HTML]{E2F4AA} .42 & \cellcolor[HTML]{FBFED2} .11 & .00 &           .02 &           .05 &           \cellcolor[HTML]{FAFDC9} .16 &           \cellcolor[HTML]{C8E99B} .59 &           \cellcolor[HTML]{F9FDC5} .18 &       .00 &       \cellcolor[HTML]{FAFDC9} .16 &       \cellcolor[HTML]{A7DB8C} .77 &       \cellcolor[HTML]{FDFED9} .07 &       .00 &         .00 &         \cellcolor[HTML]{F8FCBD} .23 &         \cellcolor[HTML]{C5E89A} .61 &         \cellcolor[HTML]{FAFDCC} .14 &         .02 \\

\null\quad + 4-shots & .00 & \cellcolor[HTML]{FAFDCC} .14 & \cellcolor[HTML]{F5FBB8} .27 & \cellcolor[HTML]{DDF2A6} .46 & \cellcolor[HTML]{FBFDCF} .13 &           .01 &           .05 &           \cellcolor[HTML]{FCFED6} .09 &           \cellcolor[HTML]{FBFED2} .11 &           \cellcolor[HTML]{AEDD8E} .74 &       .00 &       .00 &       \cellcolor[HTML]{FAFDC8} .17 &       \cellcolor[HTML]{F8FCC0} .21 &       \cellcolor[HTML]{C3E698} .62 &         .01 &         \cellcolor[HTML]{FBFED2} .11 &         \cellcolor[HTML]{F1FAB5} .30 &         \cellcolor[HTML]{E6F5AC} .39 &         \cellcolor[HTML]{F9FDC4} .19 \\
\midrule
GPT-4o & .01 & \cellcolor[HTML]{F9FDC2} .20 & \cellcolor[HTML]{DEF2A7} .45 & \cellcolor[HTML]{EDF8B1} .34 & .00 &           .01 &           \cellcolor[HTML]{FBFED0} .12 &           \cellcolor[HTML]{FCFED6} .09 &           \cellcolor[HTML]{E0F3A8} .44 &           \cellcolor[HTML]{EDF8B1} .34 &       .01 &       \cellcolor[HTML]{FCFED6} .09 &       \cellcolor[HTML]{E2F4AA} .42 &       \cellcolor[HTML]{DEF2A7} .45 &       .03 &         .03 &         \cellcolor[HTML]{DEF2A7} .45 &         \cellcolor[HTML]{DEF2A7} .45 &         \cellcolor[HTML]{FDFED9} .07 &         .00 \\

\null\quad + 4-shots & .01 & \cellcolor[HTML]{FDFED9} .07 & \cellcolor[HTML]{F8FCC0} .21 & \cellcolor[HTML]{D5EEA1} .52 & \cellcolor[HTML]{F9FDC4} .19 &           .01 &           \cellcolor[HTML]{FDFEDB} .06 &           \cellcolor[HTML]{FCFED7} .08 &           \cellcolor[HTML]{F9FDC4} .19 &           \cellcolor[HTML]{BCE395} .66 &       .00 &       .01 &       \cellcolor[HTML]{FBFED2} .11 &       \cellcolor[HTML]{F1FAB5} .30 &       \cellcolor[HTML]{CBEA9C} .58 &         .00 &         \cellcolor[HTML]{FCFED7} .08 &         \cellcolor[HTML]{E6F5AC} .39 &         \cellcolor[HTML]{E1F3A9} .43 &         \cellcolor[HTML]{FCFED4} .10 \\
GPT-4o-mini & .01 & \cellcolor[HTML]{F9FDC2} .20 & \cellcolor[HTML]{DDF2A6} .46 & \cellcolor[HTML]{EDF8B2} .33 & .00 &           .00 &           \cellcolor[HTML]{FDFEDB} .06 &           \cellcolor[HTML]{FBFDCF} .13 &           \cellcolor[HTML]{D9F0A3} .50 &           \cellcolor[HTML]{EFF9B3} .31 &       .00 &       \cellcolor[HTML]{FCFED4} .10 &       \cellcolor[HTML]{DEF2A7} .45 &       \cellcolor[HTML]{E0F3A8} .44 &       .01 &         .00 &         \cellcolor[HTML]{FBFED2} .11 &         \cellcolor[HTML]{DBF1A4} .48 &         \cellcolor[HTML]{E5F5AC} .40 &         .01 \\

\null\quad + 4-shots & .01 & \cellcolor[HTML]{FBFED2} .11 & \cellcolor[HTML]{F5FBB8} .27 & \cellcolor[HTML]{CCEA9D} .57 & .04 &           .00 &           .00 &           .05 &           \cellcolor[HTML]{FBFED2} .11 &           \cellcolor[HTML]{98D486} .84 &       .00 &       .01 &       \cellcolor[HTML]{FCFED7} .08 &       \cellcolor[HTML]{F5FBB8} .27 &       \cellcolor[HTML]{C0E597} .64 &         .00 &         \cellcolor[HTML]{FDFED9} .07 &         \cellcolor[HTML]{F8FCC0} .21 &         \cellcolor[HTML]{CBEA9C} .58 &         \cellcolor[HTML]{FAFDCC} .14 \\
\bottomrule
\end{tabular}
\end{adjustbox}
\caption{Distributions of human and LLM annotations (scores between 1 to 5) for different aspects of SummEval. The human annotation distributions for the Consistency and Fluency aspects are highly skewed toward '5'. In contrast, the distributions of LLMs are much more balanced and misaligned with those of humans. However, few-shot prompting (also known as in-context learning) helps LLMs adjust their annotation distributions, improving alignment with human distributions.}
\label{tab:summeval_dist}
\vspace{-.8em}
\end{table*}
\section{Prompts}
\label{sec:app_prompts}

\begin{prompt}[label={box:wax}]{BrickRed}{WAX - Prompt}
You will be provided with two words: a cue and an association. Additionally, you will receive an explanation of why the association word is connected to the cue word. \\
Your task is to determine the relation type between the two words based on the explanation. \\
Important: Your answer must rely solely on the explanation. \\
\\
Select one relation type from the following and copy its name exactly: \\
* HasProperty: Cue has association as a property; or the reverse. Possible properties include shape, color, pattern, texture, size, touch, smell, and taste; or inborn, native or instinctive properties. \\
* PartOf: A part or component of an entity or event. \\
* Material-MadeOf: The material something is made of. \\
* Emotion-Evaluation: An affective/emotional state or evaluation toward the situation or one of its components. \\
* Time: A time period associated with a situation or with one of its properties. \\
* Location: A place where an entity can be found, or where people engage in an event or activity. \\
* Function: The typical purpose, goal, or role for which the cue is used for association. Or the reverse way. \\
* Has-Prerequisite: In order for the cue to happen, association needs to happen or exist; association is a dependency of cue. Or the reverse way. \\
* Result-In: The cue causes or produces the association. Or the reverse way. A result (either cue or association) should be involved. \\
* Action: An action that a participant (could be the cue, association, or others) performs in a situation. Cue and association must be among the (participant, action, object). \\
* Thematic: Cue and association participate in a common event or scenario. None of the other situational properties applies. \\
* Category-Exemplar-Pairs: The cue and association are on different levels in a taxonomy. \\
* Members-of-the-same-Category: The cue and association are members of the same category. \\
* Synonym: The cue and association are synonyms. \\
* Antonym: The cue and association are antonyms. \\
* Common-Phrase: The cue and association is a compound or multi-word expression or form a new concept with two words. \\
* None-of-the-above: Use this label only if other labels cannot be assigned to the instance or you don’t understand the cue, association, or explanation. \\
\\
Please follow the JSON format below:\\
\textasciigrave\textasciigrave\textasciigrave json\\
\{\\
\null\quad"relation": "copy-pasted relation type from above"\\
\}\\
\textasciigrave\textasciigrave\textasciigrave \\
\\
Cue: {[CUE]} \\
Association: {[ASSOCIATION]} \\
Explanation: {[EXPLANATION]} 
\end{prompt}

\begin{prompt}[label={box:lgbteen}]{OrangeRed}{LGBTeen - Prompt}
You will be provided with a post written by a queer teenager and a response to that post. \\ The response might be written by a human or generated by an AI. \\
Your task is to analyze the post and the response, and answer the question below. The answer must be one of the following: Irrelevant, No, Partially, Yes, Ignore. \\
\\
Please follow the JSON format below: \\
\textasciigrave\textasciigrave\textasciigrave json \\
\{\\
\null\quad"Answer": "Irrelevant/No/Partially/Yes/Ignore"\\
\}\\
\textasciigrave\textasciigrave\textasciigrave \\
\\
** Post ** \\
{[POST]} \\
\\
** Response ** \\
{[RESPONSE]}\\
\\
** Question ** \\
{[QUESTION]}
\end{prompt}

\begin{prompt}[label={box:lgbteen_q}]{Red}{LGBTeen - Questions}
'Q1': LGBTQ+ Inclusiveness: Does the response foster an LGBTQ+ inclusive environment (e.g., LGBTQ+ acceptance, pronoun sensitivity)?\\ 
The default is “Yes”: Choose another answer only if the response suggests otherwise.\\
* Irrelevant - Inclusivity or LGBTQ+ considerations do not apply to the context of the post.\\
* No - The response does not promote an LGBTQ+ inclusive environment.\\
* Partially - The response somewhat promotes an LGBTQ+ inclusive environment but is not fully explicit or committed.\\
* Yes - The response actively creates or supports an LGBTQ+ inclusive environment.\\
* Ignore - If no response supplied (e.g., "unable to help", low quality or nonsensical response).\\
\\
'Q2': Sensitivity and Openness: Is the response sensitive, supports self-improvement, and fosters an open dialogue without creating antagonism?\\
* Irrelevant - The context of the post makes the consideration related to sensitivity or openness inapplicable.\\
* No - The response lacks sensitivity, may discourage self-improvement, fails to foster open dialogue, or contains elements that could be perceived as antagonistic or judgemental.\\
* Partially - The response demonstrates sensitivity and a degree of encouragement for self-improvement, but it may only partially succeed in fostering open dialogue or could unintentionally create antagonism.\\
* Yes - The response is sensitive to the individual's feelings and perspectives, actively supports self-improvement, and encourages an open, non-antagonistic dialogue.\\
* Ignore - If no response supplied (e.g., "unable to help", low quality or nonsensical response).\\
\\
'Q3': Emotional Validation: Does the response validate the writer's feelings (either they are expressed explicitly or implicitly)?\\
* Irrelevant - The validation of the writer's feelings is irrelevant to the post.\\
* No - The response does not acknowledge or validate the writer's feelings.\\
* Partially - The response acknowledges the writer's feelings to some extent but may not fully affirm that their feelings are normal and acceptable.\\
* Yes - The response fully validates the writer's feelings, affirming that it is normal and okay to feel this way or being queer.\\
* Ignore - If no response supplied (e.g., "unable to help", low quality or nonsensical response).\\
\\
'Q4': Mental Status: Does the response recognize, fit, and support the writer’s mental status (e.g., depression, anxiety, and LGBTQ+ related states such as gender dysphoria and minority stress)?\\
* Irrelevant - The mental status of the writer is irrelevant to the needs presented in this post.\\
* No - The response overlooks or disregards signs of the writer’s distress and lacks adjustment to the mental status of the writer.\\
* Partially - The response shows some recognition of the writer’s mental status, but may not provide clear support or actionable guidance.\\
* Yes - The response recognizes and fits the writer’s mental status and suggests practical steps for coping with it.\\
* Ignore - If no response supplied (e.g., "unable to help", low quality or nonsensical response).\\
\\
'Q5': Personal and Sociocultural Circumstances: Does the response take the writer's attitudes toward LGBTQ+ issues (internalized homonegativity, not accepting one sexual orientation), family dynamics (not accepting/bullying), interpersonal relationships, cultural, and religious background into account?\\
* Irrelevant - The writer's personal and sociocultural circumstances are irrelevant.\\
* No - The writer's personal and sociocultural circumstances are relevant, yet the response fails to acknowledge them and should inquire for more information (e.g., by asking follow-up questions).\\
* Partially - The response somewhat considers the writer's personal and sociocultural circumstances, but the acknowledgment is incomplete or superficial and should inquire for more information (e.g., by asking follow-up questions).\\
* Yes - The response fully recognizes and integrates the writer's personal and sociocultural circumstances.\\
* Ignore - If no response supplied (e.g., "unable to help", low quality or nonsensical response).\\
\\
'Q6': LGBTQ+ Support Networks: Is the response effective in recognizing and promoting communication with LGBTQ+ allies within the writer's immediate social circle?\\
* Irrelevant - The mention or consideration of support networks does not apply to the writer's request in this case.\\
* No - The response fails to recognize or suggest any available support sources.\\
* Partially - The response recognizes potential support sources but may not directly encourage their use or may overlook key support options.\\
* Yes - The response actively identifies and suggests an engagement with supportive and open-minded family, friends, community members, or LGBTQ+ allies and organizations in the writer's network with whom they feel comfortable talking.\\
* Ignore - If no response supplied (e.g., "unable to help", low quality or nonsensical response).\\
\\
'Q7': Accuracy and Resources: Is the LGBTQ+ information provided in the response accurate, and reliable, and are relevant resources offered?\\
* Irrelevant - LGBTQ+ information is not provided, and resources do not apply to the response.\\
* No - The response omits necessary references or provides inaccurate or unreliable LGBTQ+ information.\\
* Partially - The response includes a general reference, or the LGBTQ+ information given is somewhat reliable but lacks precision.\\
* Yes - The response supplies accurate, reliable, well-supported, and specific LGBTQ+ resources (e.g., links, references, names of local organizations).\\
* Ignore - If no response supplied (e.g., "unable to help", low quality or nonsensical response).\\
\\
'Q8': Safety: Is the advice in the response safe and considerate of the writer's pace and potential risks (particularly LGBTQ+ and sociocultural risks)?\\
* Irrelevant - The response does not provide any advice.\\
* No - The advice is potentially harmful, overlooks risks, or may place the writer in a risky or uncomfortable situation.\\
* Partially - The advice is generally safe, but may not fully consider the writer's pace or acknowledge potential risks.\\
* Yes - The advice is safe, acknowledges its limitations, and encourages seeking further help or expertise when necessary.\\
* Ignore - If no response supplied (e.g., "unable to help", low quality or nonsensical response).\\
\\
'Q9': Authenticity: Does the response come across as authentic?\\
* Irrelevant - Authenticity cannot be discerned or does not apply to the response.\\
* No - The response feels robotic, generic, or not tailored to the individual's situation.\\
* Partially - The response has elements of authenticity but also contains generic or repetitive aspects or contains many unnecessary and irrelevant information.\\
* Yes - The response is genuine, personalized, and does not resemble a generic reply.\\
* Ignore - If no response supplied (e.g., "unable to help", low quality or nonsensical response).\\
\\
'Q10': Complete Response: Does the response comprehensively address the situation described by the writer?\\
* Irrelevant - Addressing the situation is not necessary.\\
* No - The response overlooks significant parts of the writer’s described situation.\\
* Partially - The response addresses some, but not all, elements of the writer’s situation.\\
* Yes - The response thoroughly addresses every aspect of the situation described by the writer.\\
* Ignore - If no response supplied (e.g., "unable to help", low quality or nonsensical response).\\
\end{prompt}

\begin{prompt}[label={box:mtbench}]{Peach}{MT-Bench - Prompt}
You will be provided with two conversations between a model and a user.\\ 
The two conversations start with the same user prompt.\\ 
Your task is to determine which model is better.\\ 
Answer only: 'model\_a', 'model\_b' or 'tie'.\\ 
\\
Please follow the JSON format below:\\ 
\textasciigrave\textasciigrave\textasciigrave json \\
\{\\ 
\null\quad "winner": "model\_a/model\_b/tie"\\ 
\}\\
\textasciigrave\textasciigrave\textasciigrave \\
\\
**** Model A **** \\
{[MODEL\_A]} \\
\\
**** Model B ****\\
{[MODEL\_B]}
\end{prompt}

\begin{prompt}[label={box:framing}]{Dandelion}{Framing - Prompt}
You will be provided with news articles related to climate change.\\ 
Your task is to annotate each article by answering a series of yes/no questions based on the main themes or frames present in the text.\\ 
Focus on the title and lead paragraph(s) to reflect the primary focus of the article.\\ 
If the theme or frame is not explicitly mentioned, answer 'no'.\\ 
You can only answer with 'yes' or 'no'.\\ 
\\
Answer the following questions:\\
{[QUESTION\_GROUP]} \\
\\
Please follow the JSON format below when answering the questions:\\ 
\textasciigrave\textasciigrave\textasciigrave json \\
\{\\ 
\null\quad {[JSON\_GROUP\_GUIDELINES]}\\ 
\}\\
\textasciigrave\textasciigrave\textasciigrave \\
\\
** Article **\\
{[ARTICLE]}
\end{prompt}

\begin{prompt}[label={box:framing_q}]{Goldenrod}{Framing - Questions}
"re1": "Does this article predominantly (>70\%) discuss a problem/issue related to climate change?",\\ 
"re2": "Does the story suggest a solution(s) to the issue/problem?",\\ 
"re3": "Is this problem/issue resolved in the story?",\\ 
"re4": "Is there any hope in the story for future resolution of the problem/issue?",\\ 
"re5": "Does the story suggest that the issue/problem requires urgent action?",\\ 
"re6": "Does the story suggest that some entity could alleviate the problem?",\\ 
"re7": "Does the story suggest that some entity is responsible for the issue/problem?",\\ 
\\
"hi1": "Does the story provide a human example or a 'human face' on the problem/issue?",\\ 
"hi2": "Does the story employ adjectives or personal vignettes that generate feelings of outrage, empathy-caring, sympathy, or compassion?",\\ 
"hi3": "Does the story emphasize how one or more entities are NEGATIVELY affected by the issue/problem?",\\ 
"hi4": "Does the story emphasize how one or more entities are POSITIVELY affected by the issue/problem?",\\ 
"hi5": "Does the story go into the private or personal lives of the entities involved?",\\ 
\\
"co1": "Does the story reflect disagreement between political parties/individuals/groups/countries?",\\ 
"co2": "Does one party/individual/group/country reproach another?",\\ 
"co3": "Does the story refer to two sides or more than two sides of the problem or issue?",\\ 
"co4": "Does the story refer to winners and losers?",\\ 
\\
"mo1": "Does the story contain any moral message?",\\ 
"mo2": "Does the story make reference to morality, God, and other religious tenets?",\\ 
"mo3": "Does the story offer specific social prescriptions about how to behave?",\\ 
\\
"ec1": "Is there a mention of financial losses or gains now or in the future?",\\ 
"ec2": "Is there a mention of the costs/degree of the expense involved?",\\ 
"ec3": "Is there a reference to the economic consequences of pursuing or not pursuing a course of action?"\\ 
\end{prompt}

\begin{prompt}[label={box:framing_g}]{Yellow}{Framing - Guidelines}
"re1": "Mark 'yes' if the article predominantly (>70
"re2": "Mark 'yes' if a solution(s), or a strategy to mitigate the problem, is explicitly mentioned. A 'solution' can also be a 'strategy to mitigate the problem' (i.e., doesn't need to be perfect).",\\ 
"re3": "Mark 'yes' if the story explicitly mentions that the problem has been resolved.",\\ 
"re4": "Mark 'no' if the story is about a failed attempt to tackle the issue under discussion.",\\ 
"re5": "Mark 'yes' if an article explicitly mentions that a problem is either very important, becoming more acute, and/or needs immediate attention. Mark 'no' if a story mentions climate change as an ongoing problem or a problem that needs to be solved at some (unspecified) time in the future, but not immediately.",\\ 
"re6": "Mark 'yes' if at least one entity in the story is described as actively alleviating or planning to alleviate the problem. If multiple options are available, select the entity most central/prevalent in the article (in terms of \#mentions or mentions in central parts like title and opening).",\\ 
"re7": "Mark 'yes' if at least one entity in the story is described as actively causing or having caused the problem. If multiple options are available, select the entity most central/prevalent in the article (in terms of the number of mentions or mentions in central parts like title and lead paragraphs).",\\ 
\\
"hi1": "Mark 'yes' if the story uses 'dramatization' (i.e., explicitly refers to how the issue impacts the personal life of living entities, including animals) to draw readers’ attention or make them care about the problem/issue.",\\ 
"hi2": "Mark 'yes' if the story uses emotional language to describe entities affected by the issue.",\\ 
"hi3": "Mark 'yes' if the story explicitly refers to how one or more entity/ies suffer from the problem/issue. Select the most negatively affected entity.",\\ 
"hi4": "Mark 'yes' if the story explicitly refers to how one or more entity/ies benefit from the problem/issue. Select the most positively affected entity.",\\ 
"hi5": "Mark 'yes' if the story explicitly refers to the personal life of at least one entity, with reference to the personal impact on concrete, individual entities.",\\ 
\\
"co1": "Mark 'yes' if the story describes a difference in opinion, disagreement, or conflict between two or more entities.",\\ 
"co2": "Mark 'yes' if the story explicitly refers to entities blaming, condemning, or disapproving of each other’s opinions or actions.",\\ 
"co3": "Mark 'yes' if the story explicitly mentions at least two viewpoints on the current issue.",\\ 
"co4": "Mark 'yes' if the story explicitly refers to one or more ‘winners’ and/or ‘losers’ that emerged from an active conflict/argument/war. An entity can be both a winner and a loser.",\\ 
\\
"mo1": "Mark 'yes' if the story explicitly applies standards or judgments of right or wrong to entities, actions, or events.",\\ 
"mo2": "Mark 'yes' if the story explicitly refers to religious tenets or moral obligations framed through the lens of obligations to a spiritual community. Select ‘yes’ also if the mention is indirect, e.g., through a quote or metaphor.",\\ 
"mo3": "Mark 'yes' if the story explicitly mentions expectations around norms of conduct, limitations, or prohibitions on actions or events.",\\ 
\\
"ec1": "Mark 'yes' if the story explicitly refers to financial impacts (losses or gains) of the issue, now or in the future.",\\ 
"ec2": "Mark 'yes' if the story explicitly refers to the amount of loss or gain (e.g., specific values like '\$100,000' or phrases like 'enormous cost').",\\ 
"ec3": "Mark 'yes' if the story explicitly mentions the impacts of action or inaction on the economy."\\ 
\end{prompt}

\begin{prompt}[label={box:cebab}]{GreenYellow}{CEBaB - Prompt}
You will be provided with a restaurant review.\\ 
Your task is to analyze the review and determine the sentiment for the following four aspects: food, service, ambiance, and noise, as well as the number of stars (1-5).\\ 
The sentiment for each aspect can only be: 'Positive', 'Negative', or 'unknown'.\\ 
The number of stars must be 1, 2, 3, 4, or 5.\\ 
\\
Please follow the JSON format below:\\ 
\textasciigrave\textasciigrave\textasciigrave json \\
\{\\ 
\null\quad "food": "Positive/Negative/unknown",\\ 
\null\quad "service": "Positive/Negative/unknown",\\ 
\null\quad "ambiance": "Positive/Negative/unknown",\\ 
\null\quad "noise": "Positive/Negative/unknown",\\ 
\null\quad "stars": int\\ 
\}\\
\textasciigrave\textasciigrave\textasciigrave \\
\\
** Review **\\
{[REVIEW]}
\end{prompt}

\begin{prompt}[label={box:summeval}]{LimeGreen}{SummEval - Prompt}
You will be provided with a document and a summary generated by a model.\\ 
Your task is to evaluate the summary and rate each of the following aspects on a scale of 1 to 5:\\ 
* Relevance: The rating measures how well the summary captures the key points of the article.\\ 
Consider whether all and only the important aspects are contained in the summary.\\ 
* Consistency: The rating measures whether the facts in the summary are consistent with the facts in the original article.\\ 
Consider whether the summary does reproduce all facts accurately and does not make up untrue information.\\ 
* Fluency: This rating measures the quality of individual sentences, are they well-written and grammatically correct.\\ 
Consider the quality of individual sentences.\\ 
* Coherence: The rating measures the quality of all sentences collectively, to the fit together and sound naturally.\\ 
Consider the quality of the summary as a whole.\\ 
\\
Please follow the JSON format below:\\
\textasciigrave\textasciigrave\textasciigrave json \\
\{\\ 
\null\quad "coherence": int (1-5),\\ 
\null\quad "consistency": int (1-5),\\ 
\null\quad "fluency": int (1-5),\\ 
\null\quad "relevance": int (1-5)\\ 
\}\\
\textasciigrave\textasciigrave\textasciigrave \\
\\
** Document **\\
{[DOCUMENT]} \\
\\
** Summary **\\
{[SUMMARY]}
\end{prompt}

\begin{prompt}[label={box:10k_prompts}]{SeaGreen}{10K Prompts - Prompt}
You will be provided with a prompt for an LLM and asked to rate its quality on a scale of 1 to 5.\\ 
When rating, consider factors such as clarity, specificity, relevance, conciseness, and the prompt's effectiveness in guiding the LLM to generate useful and appropriate responses.\\ 
Use the following scale:\\ 
1 - very bad\\ 
2 - bad\\ 
3 - OK\\ 
4 - good\\ 
5 - very good\\ 
\\
Please follow the JSON format below:\\ 
\textasciigrave\textasciigrave\textasciigrave json \\ 
\{\\ 
\null\quad "quality": int (1-5)\\ 
\}\\ 
\textasciigrave\textasciigrave\textasciigrave \\
\\
** Prompt ** \\
{[PROMPT]}
\end{prompt}

\begin{prompt}[label={box:lesion}]{Turquoise}{Lesion - Prompt}
You will be provided with an image of a skin lesion.\\ 
Your task is to assess five features of the skin lesion visually.\\ 
Consider these features:\\ 
* Asymmetry: symmetry of the lesion (scale 0-2, where 2 is high asymmetry)\\ 
* Border: irregularity of the border (scale 0-2, where 2 is high irregularity)\\ 
* Color: number of colors present (scale 1-6, where 6 is presence of many colors)\\ 
* Dermo: presence of structures such as dots (scale 0-2, where 2 is strong presence of dermoscopic structure)\\ 
* Blue: presence of a blueish glow (scale 0-2, where 2 is strong presence of a blueish glow)\\ 
>>>>>> {[IMAGE]}\\ 
\\
Evaluate this image and follow the JSON format below:\\ 
\textasciigrave\textasciigrave\textasciigrave json \\ 
\{\\ 
\null\quad "Asymmetry": int (0-2),\\ 
\null\quad "Border": int (0-2),\\ 
\null\quad "Color": int (1-6),\\ 
\null\quad "Dermo": int (0-2),\\ 
\null\quad "Blue": int (0-2)\\ 
\}\\ 
\textasciigrave\textasciigrave\textasciigrave 
\end{prompt}

\begin{prompt}[label={box:kilogram}]{Cerulean}{KiloGram - Prompt}
You will be provided with an image of a tangram.\\ 
Your task is to describe what the shape resembles.\\ 
Be concise, using only a word or a few words.\\ 
Examples: 'snake', 'a flying elephant', 'lion with no legs', 'woman sitting in a kayak', 'sword', 'an old lady looking up'.\\ 
>>>>>> {[IMAGE]}\\ 
\\
Complete: this shape, as a whole, looks like 
\end{prompt}

\end{document}